\documentclass[runningheads]{llncs}

\newif\ifdraft
\draftfalse  %

\usepackage[T1]{fontenc}
\usepackage{graphicx}
\usepackage{booktabs}
\usepackage[misc]{ifsym}
\newcommand{\corr}{(\Letter)}
\usepackage{algorithm}
\usepackage{algpseudocode}
\usepackage{csquotes}
\usepackage{paralist}
\usepackage{hyperref}
\usepackage{amsmath}
\usepackage{amssymb}
\usepackage{mathtools}
\usepackage[capitalize,noabbrev]{cleveref}
\usepackage{bm}         %
\usepackage{siunitx}    %
\usepackage{dsfont}     %
\usepackage{xspace}     %
\usepackage{xifthen}    %
\usepackage{caption}
\usepackage{subcaption}
\usepackage{enumitem}
\usepackage{todonotes}
\usepackage{float}
\usepackage{longtable}  %
\usepackage{makecell}   %

\usepackage{bibunits}
\defaultbibliographystyle{splncs04}
\defaultbibliography{bibliographies/shortRS,bibliographies/automl_bibliograhpy/shortstrings,bibliographies/automl_bibliograhpy/lib,bibliographies/automl_bibliograhpy/shortproc}
\makeatletter

\def\hyper@natlinkstart#1{%
  \Hy@backout{#1}%
  \hyper@linkstart{cite}{cite.\@bibunitname.#1}%
  \def\hyper@nat@current{#1}%
}

\def\hyper@natlinkbreak#1#2{%
  \hyper@linkend#1\hyper@linkstart{cite}{cite.\@bibunitname.#2}%
}

\def\hyper@natanchorstart#1{%
  \hyper@anchorstart{cite.\@bibunitname.#1}%
}

\def\bibcite#1#2{%
  \@newl@bel{b}{#1}{\hyper@@link[cite]{}{cite.\@bibunitname.#1}{#2}}%
}

\def\@lbibitem[#1]#2{%
  \@skiphyperreftrue
  \H@item[\hyper@anchorstart{cite.\@bibunitname.#2}%
    \@BIBLABEL{#1}\hyper@anchorend\hfill]%
  \@skiphyperreffalse
  \if@filesw
    \begingroup
      \let\protect\noexpand
      \immediate\write\@auxout{\string\bibcite{#2}{#1}}%
    \endgroup
  \fi
  \ignorespaces
}
\def\@bibitem#1{%
  \@skiphyperreftrue\H@item\@skiphyperreffalse
  \hyper@anchorstart{cite.\@bibunitname.#1}\relax\hyper@anchorend
  \if@filesw
    \begingroup
      \let\protect\noexpand
      \immediate\write\@auxout{%
        \string\bibcite{#1}{\the\value{\@listctr}}%
      }%
    \endgroup
  \fi
  \ignorespaces
}
\def\@citex[#1]#2{%
  \let\@citea\@empty
  \@cite{%
    \@for\@citeb:=#2\do{%
      \@citea
      \def\@citea{,\penalty\@m\ }%
      \edef\@citeb{\expandafter\@firstofone\@citeb}%
      \if@filesw
        \immediate\write\@auxout{\string\citation{\@citeb}}%
      \fi
      \@ifundefined{b@\@citeb}{%
        \mbox{\reset@font\bfseries ?}%
        \G@refundefinedtrue
        \@latex@warning{Citation `\@citeb' on page \thepage \space undefined}%
      }{%
        \hyper@natlinkstart{\@citeb}%
          \hbox{\csname b@\@citeb\endcsname}%
        \hyper@natlinkend
      }%
    }%
  }{#1}%
}
\makeatother
\newcolumntype{R}[1]{>{\raggedright\arraybackslash}p{#1}}

\ifdefined\N
\renewcommand{\N}{\mathds{N}} %
\else \newcommand{\N}{\mathds{N}} \fi
\newcommand{\R}{\mathds{R}} %
\ifdefined\C
\renewcommand{\C}{\mathds{C}} %
\else \newcommand{\C}{\mathds{C}} \fi
\newcommand{\argmin}{\mathop{\mathrm{arg\,min}}} %
\newcommand{\argmax}{\mathop{\mathrm{arg\,max}}} %
\newcommand{\diag}{\operatorname{diag}} %
\renewcommand{\P}{\mathds{P}} %
\newcommand{\E}{\mathds{E}} %
\newcommand{\normal}{\mathcal{N}} %
\newcommand{\Xspace}{\mathcal{X}} %
\newcommand{\Yspace}{\mathcal{Y}} %
\newcommand{\xv}{\mathbf{x}} %
\newcommand{\yv}{\mathbf{y}} %
\newcommand{\allDatasets}{\mathds{D}} %
\newcommand{\D}{\mathcal{D}} %
\newcommand{\xyi}[1][i]{\left(\xv^{(#1)}, y^{(#1)}\right)} %
\newcommand{\defAllDatasets}{\bigcup_{n \in \N}(\Xspace \times \Yspace)^n} %
\renewcommand{\xi}[1][i]{\xv^{(#1)}} %
\newcommand{\Lam}{\bm{\Lambda}}	 %
\newcommand{\preimageInducerShort}{\allDatasets\times\Lam} %
\newcommand{\ind}{\mathcal{I}} %

\newcommand{\fx}{f(\xv)} %
\newcommand{\fdomains}{f: \Xspace \rightarrow \R^g} %
\newcommand{\Hspace}{\mathcal{H}} %
\newcommand{\yh}{\hat{y}} %
\newcommand{\Lxy}{L\left(y, \fx\right)} %
\newcommand{\risk}{\mathcal{R}} %

\newcommand{\riske}{\mathcal{R}_{\text{emp}}} %
\newcommand{\pert}[3]{\ifthenelse{\equal{#2}{}}{\tilde{#1}}{\ifthenelse{\equal{#3}{}}{\tilde{#1}^{#2}}{\tilde{#1}^{#2|#3}}}}	%

\newcommand{\kv}{\mathbf{k}} %
\newcommand{\kxxp}{k\left(\xv, \xv^{\prime} \right)} %
\newcommand{\kxij}[2]{k\left(\xi, \xi[j] \right)} %
\newcommand{\Kmat}{\mathbf{K}} %
\newcommand{\RashomonSet}{Rashomon set}
\newcommand{\CashomonSet}{CASHomon set}
\newcommand{\RashomonSets}{Rashomon sets}
\newcommand{\CashomonSets}{CASHomon sets}
\newcommand{\Cashomon}{CASHomon}
\newcommand{\Rset}{\mathcal{RS}} %
\newcommand{\fref}{f_{\text{ref}}} %
\newcommand{\Rsetfull}{\Rset(\varepsilon,\fref,\mathcal{F})} %
\newcommand{\Cset}{\mathcal{CS}} %
\newcommand{\Csetfull}{\Cset(\varepsilon,\fref,\Hspace_\text{CASH}^{\D})} %
\newcommand{\PXY}{P_{\Xspace\Yspace}} %
\newcommand{\TruVar}{\textsc{TruVaR}\xspace}
\newcommand{\TruVarImp}{\textsc{TruVaRImp}\xspace}
\newcommand{\TreeFARMS}{\textsc{TreeFARMS}\xspace}
\newcommand{\evalpointbar}{\mbox{$\lambda'$}}
\newcommand{\deltabar}{\overline{\delta}}
\newcommand{\candset}{\tilde{\Lam}}
\newcommand{\evalpoint}{\lambda}
\newcommand{\objfun}{c}
\newcommand{\objobs}{\hat{c}}

\newcommand{\Prob}{\mathrm{Prob}}

\usepackage{centernot}

\makeatletter
\newcommand{\printfnsymbol}[1]{%
  \textsuperscript{\@fnsymbol{#1}}%
}
\makeatother

\begin{document}
\begin{bibunit}[splncs04]

\title{CASHomon Sets: Efficient Rashomon Sets Across Multiple Model Classes and their Hyperparameters}

\titlerunning{CASHomon Sets}
\author{Fiona Katharina Ewald\thanks{Equal contribution}\inst{1,2}\orcidID{0009-0002-6372-3401} \and
Martin~Binder\printfnsymbol{1}\inst{1,2}\and
Bernd~Bischl\inst{1,2}\orcidID{0000-0001-6002-6980} \and
Matthias~Feurer\inst{3,4}\orcidID{0000-0001-9611-8588} \and
Giuseppe~Casalicchio\inst{1,2}\orcidID{0000-0001-5324-5966}\corr}
\authorrunning{F. K. Ewald, M. Binder et al.}

\institute{Department of Statistics, LMU Munich, Munich, Germany
\Letter\,\email{Giuseppe.Casalicchio@stat.uni-muenchen.de} \and
Munich Center for Machine Learning (MCML), Munich, Germany \and
TU Dortmund University, Dortmund, Germany \and
Lamarr Institute for Machine Learning and Artificial Intelligence, Dortmund, Germany}

\maketitle              %

\begin{abstract}
    \RashomonSets{} are model sets within one model class that perform nearly as well as a reference model from the same model class.
    They reveal the existence of alternative well-performing models, which may support different interpretations.
    This enables selecting models that match domain knowledge, hidden constraints, or user preferences.
    However, efficient construction methods currently exist for only a few model classes.
    Applied machine learning usually searches many model classes, and the best class is unknown beforehand.
    We therefore study Rashomon sets in the combined algorithm selection and hyperparameter optimization (CASH) setting and call them \emph{\CashomonSets{}}.
    We propose \TruVarImp, a model-based active learning algorithm for level set estimation with an implicit threshold, and provide convergence guarantees.
    On synthetic and real-world datasets, \TruVarImp reliably identifies \CashomonSet{} members and matches or outperforms naive sampling, Bayesian optimization, classical and implicit level set estimation methods, and other baselines.
    Our analyses of predictive multiplicity and feature-importance variability across model classes question the common practice of interpreting data through a single model class.

\keywords{Rashomon Effect \and Level-set Estimation \and Feature Importance \and Predictive Multiplicity \and Automated Machine Learning}
\end{abstract}
\section{Introduction} \label{sec_intro}

The \emph{Rashomon effect} \cite{breiman_statistical_2001} refers to the existence of multiple models with comparable predictive performance that capture different aspects of the data. 
Serving as a foundational tool for systematically studying this effect, a \emph{\RashomonSet{}} \cite{fisher-jmlr19a} is a set of models that perform almost as well as a reference model.
Recently, Rudin et al.~\cite{rudin-icml24a} emphasized its practical yet underestimated importance in applied \emph{machine learning}~(ML), noting its implications and impact on
\begin{inparaenum}[\bf (1)]
    \item users' flexibility in selecting models based on domain knowledge or preferences, such as fairness constraints; \label{RE_field1}
    \item the uncertainty of model-based summary statistics; and \label{RE_field2}
    \item the reliability of explanations in \emph{interpretable machine learning}~(IML). \label{RE_field3}
\end{inparaenum}
Unfortunately, efficient \RashomonSet{} construction is currently possible for only a few model classes.
Yet different model classes are suited for different problems, and the best class is rarely known in advance \cite{fernandez-jmlr14a,shwartz-ziv-if22a,mcelfresh-neurips23a}. %
\emph{Hyperparameter optimization}~(HPO, \cite{bischl-dmkd23a}) and \emph{automated ML}~(AutoML, \cite{thornton-kdd13a,feurer-nips15a,hutter-book19a}) address this via data-driven model selection.
But this often increases complexity: AutoML systems, in their main and somewhat exclusive pursuit of maximizing predictive performance, select flexible, non-parametric models or ensembles, often ignoring interpretability and other constraints. As a result, the models produced by an AutoML system are typically complex and difficult to understand.
To address this gap, we contribute the following:
\begin{enumerate}[leftmargin=*,label=\textbf{(\arabic*)}, nosep]
    \item We formally introduce \CashomonSets{} (Section~\ref{sec_formal_framework}), which extend \RashomonSets{} beyond a single model class in the \emph{combined algorithm selection and hyperparameter optimization}~(CASH) setting.
    \item We introduce \TruVarImp, a model-based active learning algorithm for \emph{level set estimation}~(LSE) with an implicitly defined threshold (Section~\ref{sec_algo}), which can be viewed as a Bayesian optimization algorithm, and provide theoretical accuracy guarantees.
    \item We empirically show that \TruVarImp{} efficiently finds \CashomonSets{} and performs competitively with, or better than, naive sampling, AutoML-based pipelines, and other LSE baselines on nine datasets (Section~\ref{sec_experiments}).
    \item We further study \CashomonSets{} from an application perspective (Section~\ref{sec:fi_for_RE}), finding that predictive multiplicity, quantified by Rashomon capacity \cite{hsu-neurips22a}, which we extend to regression tasks, can differ between Rashomon and \CashomonSets{}, and that \emph{feature importance}~(FI) values can vary substantially across model classes, questioning the common practice of basing interpretations on a single model class.
\end{enumerate} %

\section{Related Work} \label{sec_rw}

We position our work at the intersection of (i) \RashomonSets{} for predictive multiplicity and interpretability, and (ii) LSE for efficient identification of near-optimal regions in \emph{hyperparameter}~(HP) spaces.

\paragraph{Rashomon effect in interpretability and predictive multiplicity.}
Prior work shows that \RashomonSets{} often contain models with desirable properties (e.g., interpretability or fairness) without substantial performance loss \cite{semenova-facct22a,rudin-icml24a}.
Rashomon-based analyses have also been used to study predictive multiplicity and the instability of explanations (e.g., feature importance) derived from predictive models \cite{hsu_arxiv24,hsu-nips2024,mueller-ecml23a,Kobylinska-ieee24a}. 
We analyze both aspects: we study predictive multiplicity using the \emph{Rashomon capacity} \cite{hsu-neurips22a} and the variability of feature importance across well-performing models using \emph{variable importance clouds}~(VICs, \cite{dong-nmi20a}).
 
However, many existing works on \RashomonSets{} either study them theoretically or develop construction or exploration methods for specific model classes. 
Examples include ridge regression and related linear-model settings \cite{dong-nmi20a,semenova-facct22a}, sparse GAMs \cite{zhong-neurips23a}, rule lists \cite{mata_arXiv22}, rule sets \cite{ciaperoni-acm24}, neural networks through dropout-based exploration \cite{hsu_arxiv24}, gradient boosting \cite{hsu-nips2024}, and for additive models, kernel ridge regression, and random forests \cite{laberge-jmlr23a}. 
For generalized and scalable optimal sparse decision trees (GOSDT) \cite{lin-icml20a}, a prominent approach to construct a \RashomonSet{} is \TreeFARMS \cite{xin-neurips22a}, which we use as a baseline in our empirical comparison (Section~\ref{sec_experiments}). 
Another line of work uses candidate models generated by AutoML systems (e.g., H2O AutoML~\cite{ledell-automl20a}) as a proxy \RashomonSet{} that includes multiple model classes and studies the variability of explanations \cite{cavus-isf26a}. 
However, these approaches either remain tied to a specific model class or rely on first generating a large pool of candidate models and then subsetting it retrospectively, rather than directly identifying such sets during search. 
In contrast, we formalize \RashomonSets{} directly in the CASH setting, where the model class itself is part of the search space, and propose an efficient algorithm for identifying candidate models across heterogeneous model classes.

\paragraph{Level set estimation and hyperparameter optimization.}
LSE\footnote{While Gotovos et al.~\cite{gotovos2013} also call their algorithm \enquote{LSE}, we use the term for the general problem and denote their algorithm in monospace font, \texttt{LSE}.} 
\cite{gotovos2013} identifies the set of inputs for which an unknown function is above or below a given threshold.
We frame \CashomonSet{} construction as LSE, where candidates are classified by whether performance lies within a tolerance of the optimum.
Since characterizing such sets over a continuous domain is usually infeasible, LSE methods classify a finite candidate subset. 
This differs from standard hyperparameter optimization (HPO), which targets one near-optimal configuration.

Bryan et al.~\cite{bryan2005active} studied active LSE for expensive black-box functions using Gaussian processes (GPs)~\cite{rasmussen-book06a}.
Their \emph{straddle heuristic} samples where the threshold lies within the confidence region of the function value and prioritizes points where the distance between the confidence region bounds and the threshold value is largest.
Gotovos et al.~\cite{gotovos2013} refined this by intersecting confidence regions across iterations and discarding points once the intersection no longer crosses the threshold.
Crucially, they also propose \emph{implicitly defined} thresholds relative to the unknown optimum.
This maps directly to finding \CashomonSets{} (and \RashomonSets{}) when the best model serves as a reference but is unknown.
Bogunovic et al.~\cite{bogunovic-nips16a} propose \TruVar, an LSE algorithm that handles pointwise evaluation costs and heteroscedastic noise.
Instead of sampling only by confidence width, \TruVar selects points that most reduce confidence widths for all points not yet classified to below a shrinking target value.
Among GP-based LSE methods, \TruVar{} is the most direct methodological precursor to our approach.
We extend \TruVar{} to implicit thresholds using the method of Gotovos et al.~\cite{gotovos2013} in \TruVarImp{} (Section~\ref{sec_algo}).
Concretely, \TruVarImp{} jointly tracks potential minimizers and threshold-uncertain points, then allocates evaluations to reduce uncertainty in both sets.

\section{Formalization of CASHomon Sets} \label{sec_formal_framework}
Let $\ind:\preimageInducerShort\rightarrow\Hspace$ be an \emph{inducer} which maps data $\D\sim (\PXY)$ and a \emph{hyperparameter configuration}~(HPC) $\lambda\in\Lam$ to a fitted \emph{model}
$\fdomains$, where $g$ equals~$1$ for regression and the number of classes for classification.
The $i$-th observation in $\D$ is $\xyi, i \in \{1,...,n\}$, and the set of all datasets is $\allDatasets=\defAllDatasets$, where $\Xspace$ is the feature space and $\Yspace$ is the target space.
Let $L:\Yspace\times\R^g\rightarrow\R,$ be a pointwise loss function and $\risk(f) = \E_{(\xv,y)~\sim~\PXY}[\Lxy]$ be the expected risk, approximated via the empirical risk $\riske(f)=\frac{1}{n} \sum_{i=1}^n L(y^{(i)},f(\xv^{(i)})).$

\subsection{Background on CASH}\label{sec_CASH}

In the above formalism, HPs can represent all decisions made when producing a model (or potentially a full ML pipeline).
This can include, in particular, choosing and parameterizing data preprocessing, selecting features, and choosing a learning algorithm.
Letting categorical HPs represent discrete algorithmic decisions (i.e., which method to select) is a common way to reduce the CASH problem to an HPO problem \cite{thornton-kdd13a,feurer-nips15a}. 
This work concentrates on joint optimization of the model class and its associated HPs.
We use the term \emph{model class} to refer to the set of models $\Hspace_m := \operatorname{range}(\ind_m)$ produced by the same inducer $\ind_m$ with HP space $\Lam_m$.
Thus, this captures the intuition that a model class is a set of models that share a common structure, e.g., linear models (LMs), decision trees (DTs), or neural networks (NNs).
We generally have multiple different model classes, indexed by $m \in \{1,\ldots,M\}$.
We can now consider the overall modeling process as a single inducer, which is configured simultaneously by $m$ and $\lambda_m$, where the latter is subordinate to the former (i.e., a typical hierarchical HPC structure, commonly encountered in CASH):
$\ind_{\text{CASH}}(\D,(m,\lambda_m)) = \ind_m(\D,\lambda_m)$
with $\Lam_{\text{CASH}}:=\bigcup^M_{m=1} \{(m, \lambda_m) \mid \lambda_m \in \Lam_m\}$ as the corresponding HP space.
We denote the union of all model classes as the \emph{CASH hypothesis space} $\Hspace_{\text{CASH}}:=\bigcup^M_{m=1} \Hspace_m$, i.e., the set of all models that can be produced by the overall modeling process when any one of the $M$ model classes can be used.

\subsection{From Rashomon Sets to CASHomon Sets} \label{sec_cashomonsets}

A \RashomonSet{} is a set of measurable functions from $\Xspace$ to $\mathbb{R}^g$ containing models that perform nearly equally well as %
a reference model $\fref$ \cite{fisher-jmlr19a}. Formally, 
$ \Rsetfull:=\left\{f\in\mathcal{F}\mid \risk(f)\leq \risk(\fref) + \varepsilon\right\},$
where the offset $\varepsilon>0$ can either be given in absolute terms, or as a value relative to $\risk(\fref)$; in general, we write $\varepsilon=\varepsilon_{\mathrm{abs}}+\varepsilon_{\mathrm{rel}}\,\risk(\fref)$, assuming $\risk(\fref)>0$.
In practice, $\fref$ is typically estimated as the empirical risk minimizer from the considered hypothesis space, yielding the \emph{empirical \RashomonSet{}}.
Prior work restricts $\mathcal{F}$ to a specific model class to make the \RashomonSet{} estimation feasible (see Section~\ref{sec_rw}).
Instead, we do not restrict $\mathcal{F}$ to a single model class and approximate $\mathcal{F}$ by the set of models produced when fitting a model to the given dataset and varying its HPs. 
For a single model class with inducer $\ind_m$ and HP space $\Lam_m$, we specify the set of models $\mathcal{F}$ obtainable by tuning $\lambda_m$ for a given dataset $\D \in \allDatasets$ as
\begin{align} \label{eq_F_HP}
    \mathcal{F} = \Hspace^{\D}_{m,\text{ HPO}}:= \operatorname{range}(\ind_m(\D,\cdot)) = \{\ind_m(\D,\lambda_m) \mid \, \lambda_m \in \Lam_m\}.
\end{align}
As a necessary consequence of restricting the function class to $\Hspace^{\D}_{m,\text{ HPO}}$, the \emph{HPO \RashomonSet{}} $\Rset(\varepsilon,\fref,\Hspace^{\D}_{m,\text{ HPO}})$ is generally a strict subset of $\Rset(\varepsilon,\fref,\Hspace_m)$.
We further discuss this restriction and the impact of having only finite data in Appendix \ref{appendix_RS_HPO_viz}. 
The size of this HPO \RashomonSet{} generally increases with the number of HPs that $\ind_m$ has, and, therefore, naturally lends itself to be combined with the CASH approach (Section~\ref{sec_CASH}).
So while using $\Hspace^{\D}_{m,\text{ HPO}}$ instead of $\Hspace_m$ is a restriction, our approach also allows for a larger and more diverse \RashomonSet{} by considering multiple model classes at the same time when using the CASH search space. %
We define the \emph{\CashomonSet{}} as
\begin{align*}
    \Csetfull :=\left\{f\in\Hspace_{\text{CASH}}^{\D}\mid \risk(f)\leq \risk(\fref) +\varepsilon\right\},
\end{align*}
where $    \Hspace_{\text{CASH}}^{\D}:
    =\bigcup^M_{m=1} \{\ind_m(\D,\lambda_m) \, \vert \, \lambda_m \in \Lam_m\}. 
$
When the dataset $\D$ is fixed, we say that an HPC $\lambda_m \in \Lam_m$ is in the \RashomonSet{} if, for $m \in \{1,\ldots,M\}$, $\ind_m(\D,\lambda_m)$ is in the \RashomonSet{}.
Although $M$ model classes are considered in $\Hspace_{\text{CASH}}^{\D}$, not all of them will be represented in the \CashomonSet{} if some model classes outperform others, i.e., model class selection is performed on the fly.

Due to the computational complexity of finding a compact description of the exact \CashomonSet{}, we approximate it using a random subsample of models inside that set, which is sufficient for analysis methods that operate on discrete \RashomonSets{} or require integration over them.
This aligns with the literature, where \RashomonSets{} are often approximated by sampling from them (e.g.,~\cite{hsu-nips2024,hsu_arxiv24}).
Given $\D$ and a proposal distribution $P_{\text{CASH}}$ on $\Lam_{\text{CASH}}$, our goal is to sample from the \CashomonSet{} through rejection sampling:
\begin{inparaenum}[\bf (1)]
    \item Sample a candidate set of size $C$, $\candset=\{\lambda_i\}_{i=1}^C \sim (P_{\text{CASH}})^{C}$;
    \item Evaluate or predict the (empirical) risk $\riske(\ind(\D,\lambda_i))$ for all $i$;
    \item Accept all $\lambda_i$ for which $\risk_\mathrm{emp}(\ind(\D,\lambda_i)) \leq \risk_\mathrm{emp}(\fref)+\varepsilon$ and reject all others.
\end{inparaenum}
Aiming for a representative sample from $P_{\text{CASH}}\left(\lambda\mid \ind(\D,\lambda) \in \Cset\right)$ is a natural goal for characterizing $\Cset$ as over- or undersampling particular regions could distort downstream analyses.

\section{An Efficient Algorithm for Finding CASHomon Sets} \label{sec_algo}

Algorithmically, deciding which models are in the \CashomonSet{} reduces, in principle, to an expensive, black-box level-set estimation problem, possibly on a mixed and hierarchical space.
Direct application of LSE procedures (using appropriate surrogate models due to their computational cost and the complex search-space structure) is complicated by the fact that the level-set threshold is in our case only implicitly defined \cite{gotovos2013} as $\risk(\fref) + \varepsilon$, where $\fref$ is the best performing model from $\Hspace_{\text{CASH}}^{\D}$, which also needs to be estimated.
While one could first perform HPO to identify a reference model, followed by a (surrogate-based) LSE algorithm to identify models in the \CashomonSet{}, we introduce an algorithm that combines these steps, yielding greater efficiency.

We propose the \TruVarImp algorithm, which extends the \TruVar algorithm~\cite{bogunovic-nips16a} to handle implicitly defined threshold levels.
For (direct, explicit) LSE, \TruVar maintains a finite set of candidate points to classify as lying above or below the given threshold.
It uses a GP model that predicts both the posterior mean $\mu(\lambda)$ and the posterior variance $\sigma^2(\lambda)$ of the objective value at each candidate point $\lambda$.
In each iteration $t$, candidates with posterior distributions that fall to one side of the threshold with high confidence are classified and removed from the active, unclassified set $U_t$ and instead added to the high set $H_t$ or the low set $L_t$.\footnote{We use a notation to describe \TruVar that differs slightly from the one used in \cite{bogunovic-nips16a}.}
\TruVar proceeds in \emph{epochs} $i$, which often last for multiple iterations, and which have a corresponding target confidence bound $\eta_{(i)}$.
This value shrinks according to $\eta_{(i+1)}=r\eta_{(i)}$ as epochs advance, where the reduction factor $r\in(0, 1)$ is a configuration parameter.
It chooses which candidate points $\lambda$ to evaluate based on the summed reduction of the current uncertainty estimate $\beta_{(i)}^{1/2}\sigma(\evalpointbar)$ across all unclassified candidates $\evalpointbar\in U_t$, where we clip the reduction at uncertainty  $\eta_{(i)}$.
The algorithm allows this reduction to be scaled by a (user-defined) cost function $\operatorname{cost}(\lambda)$, and $\beta_{(i)}$ is an epoch-dependent scaling factor.
The epoch advances once all points in $U_t$ have confidence below $(1+\deltabar)\eta_{(i)}$, with configurable slack parameter $\deltabar>0$.

Bogunovic et al.~\cite{bogunovic-nips16a} present two variants of their algorithm: One for LSE, as explained above,
in which unclassified points $U_t$ are categorized into $H_t$ and $L_t$, and one for optimization, where points are classified based on whether they are potential risk minimizers.
\TruVarImp extends \TruVar for LSE with an implicitly defined threshold.
Its underlying idea is to simultaneously track a set of potential minimizers $M_t$ and a set of points $U_t$ that are unclassified with respect to the implicit threshold, thereby using both variants of \TruVar in one.
One approach to tracking minimizers and level sets simultaneously was presented by Gotovos et al.~\cite{gotovos2013}; however, our algorithm selects points to evaluate in a manner similar to \TruVar, based on which point reduces the posterior confidence to a given threshold the most.

\subsection{Our TruVaRImp Algorithm}

Following Bogunovic et al.~\cite{bogunovic-nips16a}, we model the objective function $\objfun(\evalpoint):\candset\to\R$ as a GP with constant mean $\mu_0=0$, kernel function $k(\evalpoint,\evalpoint{}')$ and, in general, heteroscedastic measurement noise $\epsilon(\evalpoint)\sim \normal(0, \sigma^2(\evalpoint))$, where we observe noisy objective values $\objobs=\objfun(\evalpoint)+\epsilon(\evalpoint)$.
We use the kernel matrix $\Kmat_t = \bigl[k(\evalpoint_i, \evalpoint_j)\bigr]_{i,j=1}^t$ for points $\{\evalpoint_i\}_{i=1}^t$ observed until after iteration $t$, the white noise term $\mathbf{\Sigma}_t = \diag(\sigma^2(\evalpoint_1),\ldots,\sigma^2(\evalpoint_t))$, and the cross-covariance vector $\kv_t(\evalpoint) = [k(\evalpoint_i, \evalpoint)]_{i=1}^t$.
The posterior distribution of $\objfun(\evalpoint_{t+1})$ given the vector of the first $t$ observations $\mathbf{\objobs}_{1:t}$ is thus $\normal(\mu(\evalpoint_{t+1}), \sigma^2(\evalpoint_{t+1}))$, where
\begin{align}
    \mu_t(\evalpoint_{t+1}) &= \kv_t(\evalpoint_{t+1})^\top(\Kmat_t+\mathbf{\Sigma}_t)^{-1}\mathbf{\objobs}_{1:t}\label{eq:mu_update}\\
    \sigma_t^2(\evalpoint_{t+1}) &= k(\evalpoint_{t+1}, \evalpoint_{t+1}) -  \kv_t(\evalpoint_{t+1})^\top(\Kmat_t+\mathbf{\Sigma}_t)^{-1}\kv_t(\evalpoint_{t+1})\mathrm{.}\label{eq:sigma_update}
\end{align}
Similar to Bogunovic et al.~\cite{bogunovic-nips16a}, for some $\evalpointbar\in\candset$ we denote the posterior variance of $\objfun(\evalpointbar)$ given observations at $\evalpoint_1,\ldots,\evalpoint_{t-1}$ as well as an additional observation at $\evalpoint$ as $\sigma_{t-1|\evalpoint}^2(\evalpointbar)$.
This represents a one-step lookahead evaluation of the posterior variance:
Given that the algorithm has already made $t-1$ observations, this quantity indicates how evaluating a specific choice of $\objfun(\evalpoint)$ will influence the posterior variance at another point $\evalpointbar$.
Notably, $\sigma_{t-1|\evalpoint}^2(\evalpointbar)$ does not depend on the observed value $\objobs$ at $\evalpoint$ and can be calculated efficiently, as described in Appendix~B of Bogunovic et al.~\cite{bogunovic-nips16a}.

Algorithm~\ref{alg:truvarimp} shows our \TruVarImp algorithm.
It chooses where to evaluate the objective, so that points from a finite input set $\candset$ are efficiently classified as above or below a threshold level $h$.
Without loss of generality, we consider the case of minimization, i.e., finding a \CashomonSet{} with a threshold defined in terms of the (unknown) minimum of the objective function: $\objfun_{\mathrm{min}} = \min_{\lambda \in \candset} \objfun(\lambda)$.
We set $h = \objfun_{\mathrm{min}} \times (1 + \varepsilon_{\mathrm{rel}}) + \varepsilon_{\mathrm{abs}}$, where we allow the expression of the \CashomonSet{} in terms of relative ($\varepsilon_{\mathrm{rel}} \ge 0$; assuming $\forall \evalpoint: \objfun(\evalpoint) \ge 0$) or absolute ($\varepsilon_{\mathrm{abs}} \ge 0$) offset from the minimum.
Typically, one of these is zero; we include both variables in this formula to remain general.

\TruVarImp keeps track of a set of potential minimizers $M_t$ of $\objfun(\cdot)$, a set of candidate points $L_t$ classified as below the threshold, a set of points classified as above the threshold $H_t$, and a set of points unclassified with respect to the threshold $U_t$.
$L_t$, $U_t$ and $H_t$ form a partition of $\candset$, and $M_t \subseteq U_t \cup L_t$.
For each point, the algorithm calculates confidence intervals
\begin{align}
[l_t(\evalpoint), u_t(\evalpoint)] =[\mu_t(\evalpoint) - \beta_{(i)}^{1/2}\sigma_t(\evalpoint),
\mu_t(\evalpoint) + \beta_{(i)}^{1/2}\sigma_t(\evalpoint)].\label{eq:confint}
\end{align}
It works under the assumption that, if $\beta_{(i)}$ is chosen large enough, the true $c(\lambda)$ is in $[l_t(\evalpoint), u_t(\evalpoint)]$ with high probability.
A point $\evalpoint$ is therefore removed from $M_t$ whenever its confidence interval does not overlap with the pessimistic estimate of the minimum $\objfun_{\mathrm{min},t}^\mathrm{pes}=\min_{\evalpointbar \in M_{t-1}} u_t(\evalpointbar)$, and a point is removed from $U_t$ and added to either $L_t$ or $H_t$ whenever its confidence interval does not overlap with the range of possible threshold values $[h_t^\mathrm{opt}, h_t^\mathrm{pes}]$ (see Algorithm~\ref{alg:truvarimp}, lines \ref{alg:hopt}--\ref{alg:hpes}).
Like \TruVar, \TruVarImp proceeds in epochs, which count up whenever confidence intervals for all remaining candidates in $U_t$ and $M_t$ fall below a threshold proportional to $\eta_{(i)}$ (Algorithm~\ref{alg:truvarimp}, line \ref{alg:epoch}).
Figure~\ref{fig:truvarimp_illustration} shows the main components of the algorithm at iteration $t$.
Our theoretical analysis (Appendix~\ref{sec:analysis}) shows that, with suitably chosen algorithm parameters, this approach identifies points in the \CashomonSet{} up to a given accuracy $\epsilon$ with high probability.

\begin{algorithm}[H]
\algrenewcommand\algorithmicrequire{\textbf{Input:}}
\algrenewcommand\algorithmicensure {\textbf{Output:}}
    \caption{Truncated Variance Reduction for Implicitly Defined LSE (\TruVarImp)} \label{alg:truvarimp}
    \begin{algorithmic}[1]
        \Require Objective function $c(\cdot)$; Domain $\candset$; evaluation-cost function $\operatorname{cost}(\evalpoint)$; GP prior ($\mu_0$, $\sigma(\evalpoint)$; $\kxxp$); confidence bounding parameters $\deltabar > 0$, $r \in (0,1)$, $\{\beta_{(i)}\}_{i \ge 1}$, $\eta_{(1)} > 0$; $\varepsilon_{\mathrm{rel}} \ge 0$, $\varepsilon_{\mathrm{abs}} \ge 0$
        \Ensure Sets $(L, H, U)$ predicted to be respectively below, above and unclassified with respect to the implicit threshold $h=\min_{\lambda \in \candset} \objfun(\lambda) \times(1+\varepsilon_{\mathrm{rel}})+\varepsilon_{\mathrm{abs}}$
        \State Initialize $L_0=H_0=\emptyset$, $M_0=U_0=\candset$, epoch number $i=1$
        \For {$t = 1,2,\dotsc$}
        \State Letting $\Delta_i(D, \sigma^2, p) := \sum\nolimits_{\evalpointbar \in D}\max\left\{ p^2 \beta_{(i)}\,\sigma^2(\evalpointbar) - \eta_{(i)}^2, 0 \right\}\mathrm{,}$
        \State \label{alg:acquisition} \vspace{-0.5cm} \[
            \begin{split}
                \hspace*{-1.5ex}  \text{find } \evalpoint_t& \leftarrow \argmax_{\evalpoint \in \candset} \Big\{
                \Big[\Delta_i(U_{t-1}, \sigma_{t-1}^2, 1) - \Delta_i(U_{t-1}, \sigma_{t-1|\evalpoint}^2, 1) +\\
                & \Delta_i(M_{t-1}, \sigma_{t-1}^2, 1{+}\varepsilon_{\mathrm{rel}}) - \Delta_i(M_{t-1}, \sigma_{t-1|\evalpoint}^2, 1{+}\varepsilon_{\mathrm{rel}})\Big] \Big/ \operatorname{cost}(\evalpoint)\Big\}\\
            \end{split}
        \]
        \State Evaluate objective at $\evalpoint_t$, observe noisy $\objobs_t$.
        \State Calculate $\mu_t$ and $\sigma_t$ according to \eqref{eq:mu_update}--\eqref{eq:sigma_update}, and
        $l_t(\evalpoint)$ and $u_t(\evalpoint)$ according to \eqref{eq:confint}
        \State $\objfun_{\mathrm{min},t}^\mathrm{pes} \leftarrow \min_{\evalpointbar \in M_{t-1}} u_t(\evalpointbar)$\label{alg:cpes}
        \State $h_t^\mathrm{pes} \leftarrow \objfun_{\mathrm{min},t}^\mathrm{pes} \times (1 + \varepsilon_{\mathrm{rel}}) + \varepsilon_{\mathrm{abs}}$\label{alg:hpes}
        \State $h_t^\mathrm{opt} \leftarrow \min_{\evalpointbar \in M_{t-1}} l_t(\evalpointbar) \times (1 + \varepsilon_{\mathrm{rel}}) + \varepsilon_{\mathrm{abs}}$\label{alg:hopt}
        \State $H_t \leftarrow H_{t-1}$, $L_t \leftarrow L_{t-1}$, $U_t \leftarrow \emptyset$, $M_t\leftarrow \emptyset$

        \For {each $\evalpoint \in U_{t-1}$}\label{alg:uloop}
            \If {$u_t(\evalpoint) \le h_t^\mathrm{opt}$}
                \State $L_t \leftarrow L_t \cup \{\evalpoint\}$
            \ElsIf {$l_t(\evalpoint) > h_t^\mathrm{pes}$}
                \State $H_t \leftarrow H_t \cup \{\evalpoint\}$
            \Else
                \State $U_t \leftarrow U_t \cup \{\evalpoint\}$
            \EndIf
        \EndFor
        \For {each $\evalpoint \in M_{t-1}$}\label{alg:mloop}
            \If {$l_t(\evalpoint) \le c_{\mathrm{min},t}^\mathrm{pes}$}
                \State $M_t \leftarrow M_t \cup \{\evalpoint\}$
            \EndIf
        \EndFor

        \While{$\max_{\evalpoint \in U_t} \beta_{(i)}^{1/2}\sigma_{t}(\evalpoint) \le (1+\deltabar)\, \eta_{(i)}$ and $\max_{\evalpoint \in M_t} \beta_{(i)}^{1/2}\sigma_{t}(\evalpoint) \le \frac{1+\deltabar}{1 + \varepsilon_{\mathrm{rel}}} \eta_{(i)}$}\label{alg:epoch}
            \State $i \leftarrow i+1$, $\eta_{(i)} \leftarrow r \times \eta_{(i-1)}$.\label{alg:epoch2}
        \EndWhile
        \If{budget is exhausted}
            \State \Return $H \gets H_t, L \gets L_t, U \gets U_t$
        \EndIf
        \EndFor
    \end{algorithmic}
\end{algorithm}
\paragraph{Implementation Notes.}
In practice, the GP's kernel HPs are usually unknown and estimated from observed performance data.
In applications such as HPO, they are re-fit in each iteration, so a different kernel function is used at each step.
We calculate $\sigma_{t-1|\evalpoint}^2(\evalpointbar)$ from the kernel function at $t-1$ to make use of the efficiency gains mentioned above, but it ceases to be the exact value of $\sigma_t^2(\lambda')$ for $\lambda=\lambda_t$.
This is a common simplification; as seen in Section~\ref{sec_experiments}, the algorithm works well in practice.

Furthermore, the CASH hyperparameter space $\Lam_{\text{CASH}}$ is generally hierarchical: the HPs of inducer $\ind_{m}$ are active only when model class $m$ is selected.
Although one could design problem-specific kernels that transfer information across model classes, we make the problem-agnostic assumption that performance observations are informative only within the same model class.
This yields the block-diagonal kernel $k(\lambda_m,\lambda'_{m'})=\delta_{m m'}k_m(\lambda_m,\lambda'_m)$ with independent class-specific kernels $k_m(\cdot,\cdot)$ defined on each subspace $\Lam_{m}$.

\begin{figure}[thb]
    \centering
    \includegraphics[width=1\textwidth]{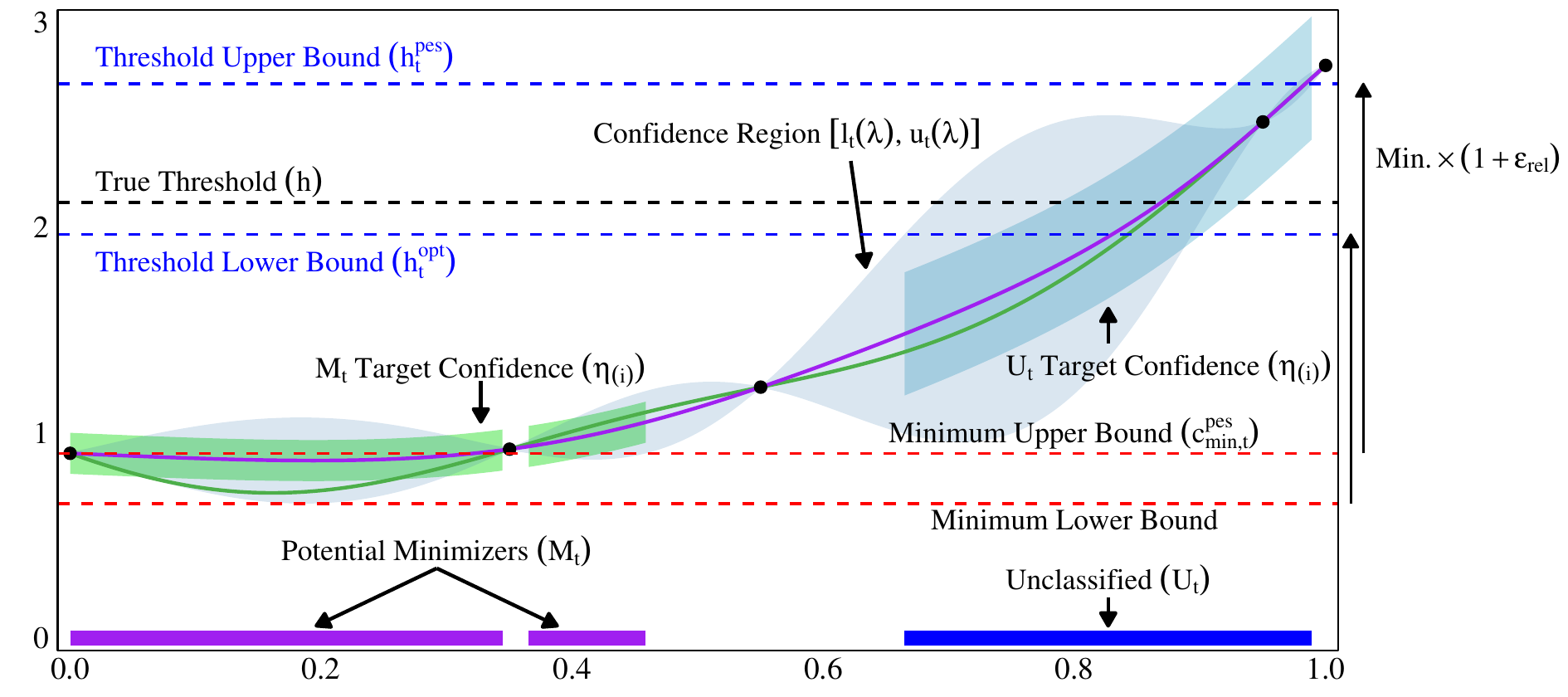}
    \caption{Illustration of the \TruVarImp algorithm.
    The true (unknown) function is shown in green, with known function values $c(\lambda)$ as black dots, through which a GP model is fit (purple line with transparent confidence region).
    $M_t$ is the set of potential minimizers (purple strip at the bottom), for which the confidence interval crosses the ``pessimistic'' (i.e., highest possible within confidence region) minimum $c_{\mathrm{min},t}^\mathrm{pes}$ (upper red dashed line).
    Configuration points $\lambda$ are classified as belonging to the lower or upper set ($L_t$ and $H_t$, not shown), or remain unclassified ($U_t$, blue strip at the bottom), depending on whether their confidence interval crosses the region of plausible threshold values $[h_t^\mathrm{opt}, h_t^\mathrm{pes}]$ (blue dashed lines).
    In this illustration, these are relative to the minimum with $\varepsilon_{\mathrm{rel}} = 2, \varepsilon_{\mathrm{abs}} = 0$.\\
    \TruVarImp selects points based on how much evaluating them will reduce the variance for candidates in both $M_t$ and $U_t$ that goes beyond their target confidence values $\eta_{(i)}$ the most (green and purple ribbons around the GP mean).
    }
    \label{fig:truvarimp_illustration}
\end{figure}

\section{Experiments} \label{sec_experiments}

We
\begin{inparaenum}[\bf (1)]
    \item evaluate the reliability and efficiency of \TruVarImp, and\label{list_exp1}
    \item empirically examine practical implications of extending \RashomonSets{} to the CASH hypothesis space.\label{list_exp2}
\end{inparaenum}
To address~(\ref{list_exp1}), we compare our proposed \TruVarImp algorithm against LSE baselines, HPO, and random evaluation in Section~\ref{sec:lse_experiment}, addressing our first two research questions (RQs):
\begin{enumerate}[leftmargin=*,label=\textbf{RQ~1.\arabic*}, nosep]
    \item Can \TruVarImp find the \CashomonSet{} accurately and efficiently? \label{rq_setsize}
    \item How does \TruVarImp perform in comparison to baselines? \label{rq_baselines}
\end{enumerate}
Regarding~(\ref{list_exp2}), we explore the models in the \CashomonSets{} found by \TruVarImp, considering the following questions in Section~\ref{sec:fi_for_RE}:
\begin{enumerate}[leftmargin=*,label=\textbf{RQ~2.\arabic*}, nosep]
    \item Do Rashomon and \Cashomon{} models differ in predictive performance?\label{rq_pred_perf}
    \item Do Rashomon and \CashomonSets{} differ in predictive multiplicity?\label{rq_pred_mult}
    \item How do FI values vary across Rashomon and \Cashomon{} models?\label{rq_feature_imp}
\end{enumerate}
The code and reproduction scripts are available at:\\ \url{https://github.com/slds-lmu/paper_2024_rashomon_set}.

\subsection{Experimental Setup} \label{sec_exp_setup}

To study \CashomonSets{} in a large, diverse search space, we use model classes that are commonly used in practice and that also vary in interpretability.
We evaluate DTs fitted using the CART algorithm (\texttt{cart}) and GOSDT (\texttt{gosdt}). We also use gradient boosting models (\texttt{xgb}), feedforward NNs (\texttt{nnet}), support vector machines (\texttt{svm}) with two kernels (\texttt{linear} and \texttt{radial}), and elastic nets (\texttt{glmnet}); the latter as an LM for regression and a logistic model for classification datasets.
The HP search spaces are mostly lifted from \cite{binder-automl20a} and are listed in Appendix~\ref{app:learners}.

We focus on datasets used by %
Xin et al.~\cite{xin-neurips22a} and their proposed binarization for comparability.
We consider various classification tasks: \emph{COMPAS} (\texttt{CS}, with and without binarization), \emph{German Credit} (\texttt{GC}), and binarized versions of \emph{car evaluation} (\texttt{CR}), \emph{Monk~2} (\texttt{MK}), \emph{breast cancer} (\texttt{BC}), and \emph{FICO} (\texttt{FC}).
We also include two regression tasks: \emph{Bike Sharing} (\texttt{BS}) and a synthetic dataset (\texttt{ST}).
See Appendix~\ref{app:datasets} for more details on the datasets.

We split each dataset into two parts.
We use 2/3 for model selection, where we estimate $\riske$ using a $10\times{}10$-fold repeated cross-validation, and then refit final models on the full training split.
We use the remaining 1/3 of the data as test data for performance, predictive multiplicity, and FI analysis.
To compute \CashomonSets{}, we sample 8000 HPCs uniformly at random from the search space of each learning algorithm as the candidate set $\candset$.
We evaluate each HPC on each dataset, measuring the Brier score for classification and the root mean squared error (RMSE) for regression tasks.
\texttt{gosdt} can handle only binary data and is therefore evaluated only on our binarized classification datasets.
For a given task, we consider the best HPC found over all considered model classes as the reference model $\fref$ and use its performance to determine the \CashomonSet{} cutoff using commonly used~\cite{mueller-ecml23a, cavus-isf26a} values $\varepsilon_{\mathrm{rel}} = 0.05$ and $\varepsilon_{\mathrm{abs}}=0$.

\subsection{Experiments on Level Set Estimation}\label{sec:lse_experiment}
We evaluate the ability of \TruVarImp to identify the \CashomonSet{}, treating the LSE problem as an active learning problem:
configuration points are selected and evaluated so that a surrogate model fitted to the observed design points can be used to classify whether unobserved configurations belong to the \CashomonSet{}.
This viewpoint is reflected in the acquisition strategy of \TruVarImp, which prioritizes evaluations that reduce uncertainty about the membership of currently unclassified points.
It is also aligned with the goal stated at the end of Section~\ref{sec_cashomonsets}, namely to sample from the conditional distribution $P_{\text{CASH}}(\lambda\mid \ind(\D,\lambda) \in \Cset)$, if the candidate set is itself sampled from $P_{\text{CASH}}$ and the surrogate predicts performance values reasonably well.

There are different ways to use a GP surrogate to predict the \CashomonSet{}.
A fully Bayesian evaluation could average \CashomonSet{} membership over posterior GP samples:
In each GP sample, the reference configuration and cutoff value would, in general, differ, and the uncertainty about the true best model would be effectively marginalized over.
However, this would not reflect actual practice, where researchers care about generating alternative models and explanations for a concrete, fitted reference model.
Therefore, we take an application-centric view, evaluating a \TruVarImp run based on the \CashomonSet{} it would generate from the best \emph{observed} model.
We run \TruVarImp with objective $\hat{c}(\lambda)=\riske(\ind(\D,\lambda))$.
After $t$ evaluations, we use the incumbent $f_{\text{ref}}^{(t)}=\ind(\D,\hat{\lambda}_t^{*})$ with $\hat{\lambda}_t^{*}\in \argmin_{\lambda\in\{\lambda_1,\cdots,\lambda_t\}}\hat{c}(\lambda)$ as reference model.
The corresponding cutoff $\hat{h}_t=(1+\varepsilon_{\mathrm{rel}})\riske(f_{\text{ref}}^{(t)})+\varepsilon_{\mathrm{abs}}$ is then used to classify configurations into the predicted \CashomonSet{} based on model predictions, $\hat{\Cset}_{t}=\{\lambda\in\candset\mid\mu_{t}(\lambda)\le\hat{h}_t\}$.
We compare this against the (finite) ground truth sample from the (infinite) \CashomonSet{}, calculated as a subset of $\candset$ using the precomputed $\hat{c}(\lambda)$ with the minimum over $\candset$ as reference, as this is the representative sample from the \CashomonSet{} that a practitioner would obtain by evaluating all samples in $\candset$.

We compare \TruVarImp against the following LSE algorithms:
\texttt{STRADDLE}~\cite{bryan2005active}, \texttt{LSE} and $\texttt{LSE}_\texttt{IMP}$ (the level set estimation algorithm and its implicit LSE variant introduced by Gotovos et al.~\cite{gotovos2013}), and the original \TruVar algorithm~\cite{bogunovic-nips16a}.
We furthermore consider \texttt{RANDOM} (random configuration point evaluations), and \texttt{OPTIMIZE}: HPO via Bayesian optimization, using the expected improvement acquisition function~\cite{garnett-book23a}.
By comparing our LSE-based approach with a Bayesian optimization approach on the same CASH space, we implicitly compare against Cavus et al.~\cite{cavus-isf26a}, who extract their \RashomonSet{} from the archive of a standard AutoML system (H2O AutoML~\cite{ledell-automl20a}) without explicitly targeting the underlying LSE definition or evaluating what they lose by disregarding this (they effectively take the best models from the underlying random search of H2O AutoML).
See Appendix~\ref{appendix_truvarimp_config} for more details on algorithm configuration parameters.

We use a candidate set of 8000 pre-computed performance values per model class; the CASH space thus has 48,000 candidates in total for binary datasets, 40,000 for the others.
Figures~\ref{fig:cash_results} and~\ref{fig:cash_results_app} (Appendix~\ref{appendix_truvarimp_config}) show progress in surrogate model quality, which we quantify using the F1 score, since \CashomonSet{} membership is a highly imbalanced binary classification problem.
On average, \TruVarImp outperforms baselines in classifying \CashomonSet{} members, enabling accurate threshold decisions. %
Notably, the \CashomonSet{} task appears difficult for baselines, likely because the search space is complex and multimodal.
\begin{figure}[ht]
  \centering
  \includegraphics[width=\linewidth, clip]{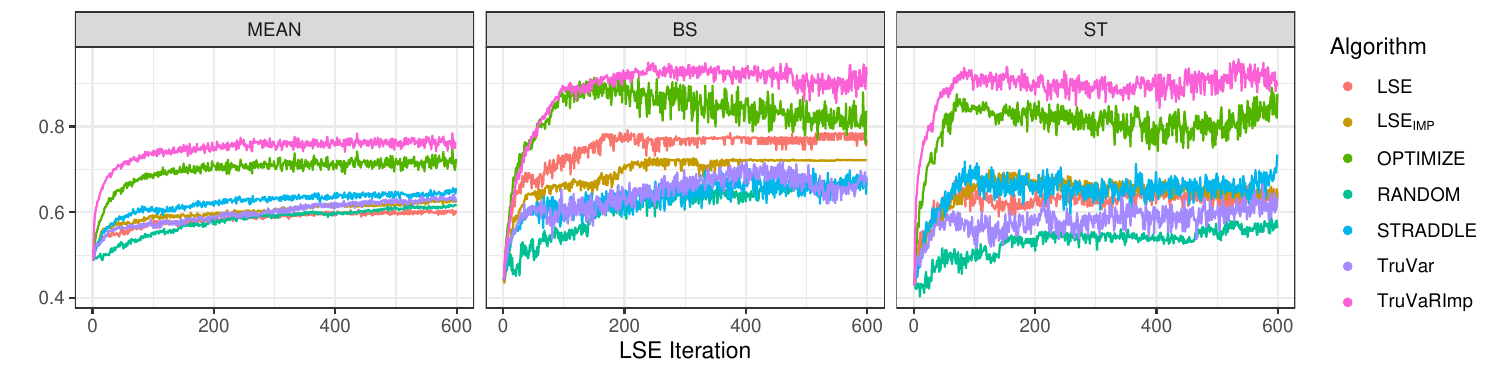}
  \caption{Mean \CashomonSet{} algorithm progress in terms of F1 score of the surrogate model predicting \CashomonSet{} membership.
  The iteration count does not include the initial sample of 30 points per model class.
  The left facet shows the mean values over all datasets.
  The middle and right show individual performance on two example datasets.
  }
  \label{fig:cash_results}
\end{figure}

\subsection{An Application Perspective}\label{sec:fi_for_RE}

As we are not aware of any algorithm producing \RashomonSets{} across diverse model classes, we compare our \CashomonSets{} with \RashomonSets{} found by a state-of-the-art \RashomonSet{} algorithm, namely \TreeFARMS \cite{xin-neurips22a}, which is a popular, ready-to-use method for finding \RashomonSets{} for GOSDT \cite{lin-icml20a}.
We illustrate that performance, predictive multiplicity, and FI values may differ between models from the \TreeFARMS \RashomonSet{} and the \CashomonSets{}.
For the latter, we use the results from running \TruVarImp on the candidate set, i.e., HPCs actually evaluated by \TruVarImp, and fit the resulting configurations on the training split of each dataset.

\paragraph{Predictive Performance.}
The top row of Figure~\ref{fig_pred_perf_RC} shows the Brier score distributions on the test set for three representative tasks (\texttt{BC}, \texttt{CR}, and binarized \texttt{CS}).
The reference models of Rashomon and \CashomonSets{} differ notably, since models from the model class \texttt{gosdt} often perform comparably worse than some of the other model classes.
Only the \CashomonSet{} of \texttt{CS} contains \texttt{gosdt} models; for \texttt{BC} and \texttt{CR}, tree-based models do not perform well enough.\footnote{Table~\ref{tab_exp_model_CS}, Appendix~\ref{appendix_exp_results}, shows the number of models per model class contained in the \CashomonSets{} per dataset.}
Hence, models in our \CashomonSets{} perform, on average, better than those found by \TreeFARMS.
For better reference models, $\varepsilon_{\mathrm{rel}}\,\risk(\fref)$ becomes smaller, and performance tends to scatter less across models, even on a test set, as seen for \texttt{BC} and \texttt{CR}.

\begin{figure}[tb]
    \centering

    \begin{subfigure}[b]{0.32\textwidth}
        \centering
        \includegraphics[width=\textwidth]{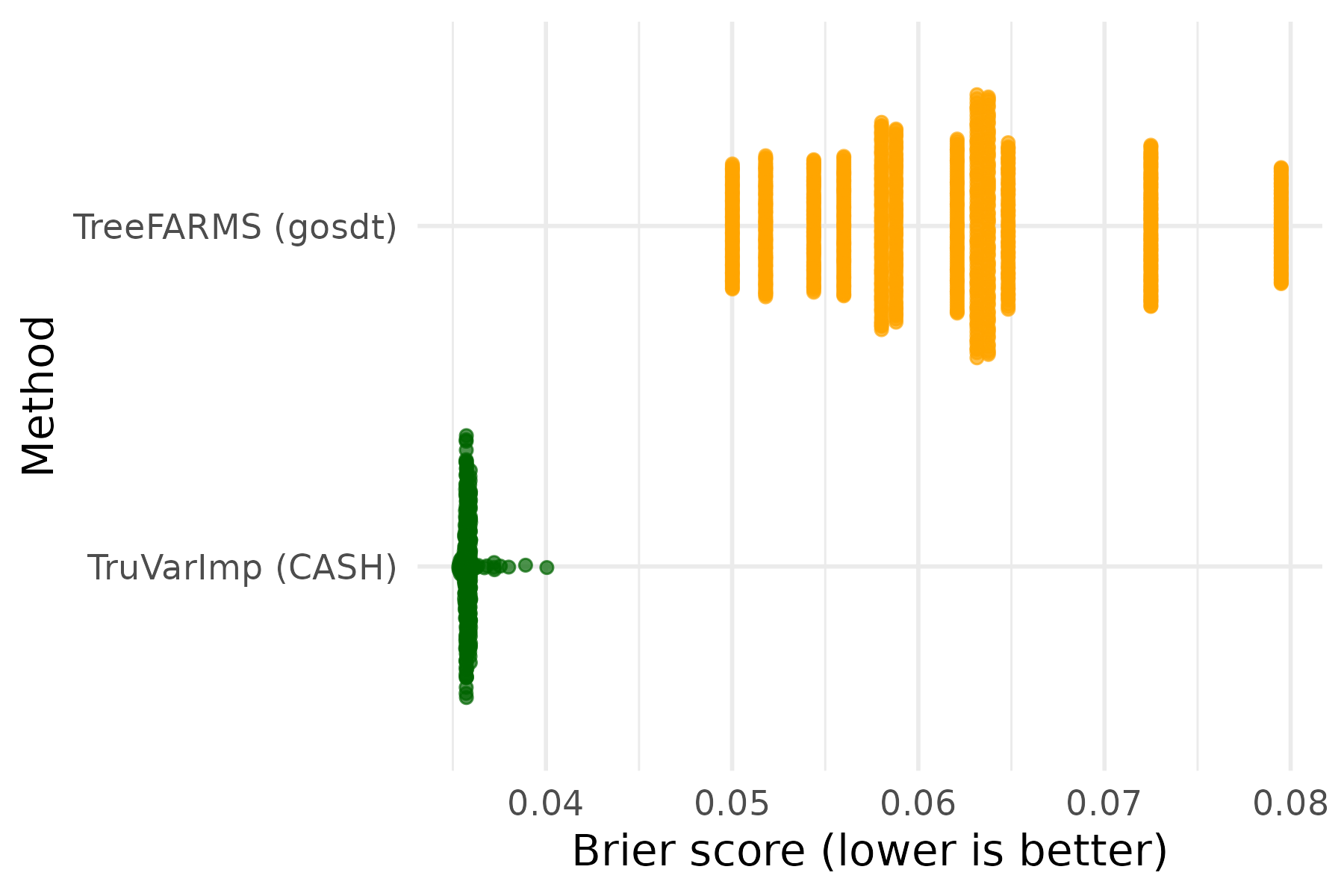}
        \label{fig:bc}
    \end{subfigure}
    \hfill
    \begin{subfigure}[b]{0.32\textwidth}
        \centering
        \includegraphics[width=\textwidth]{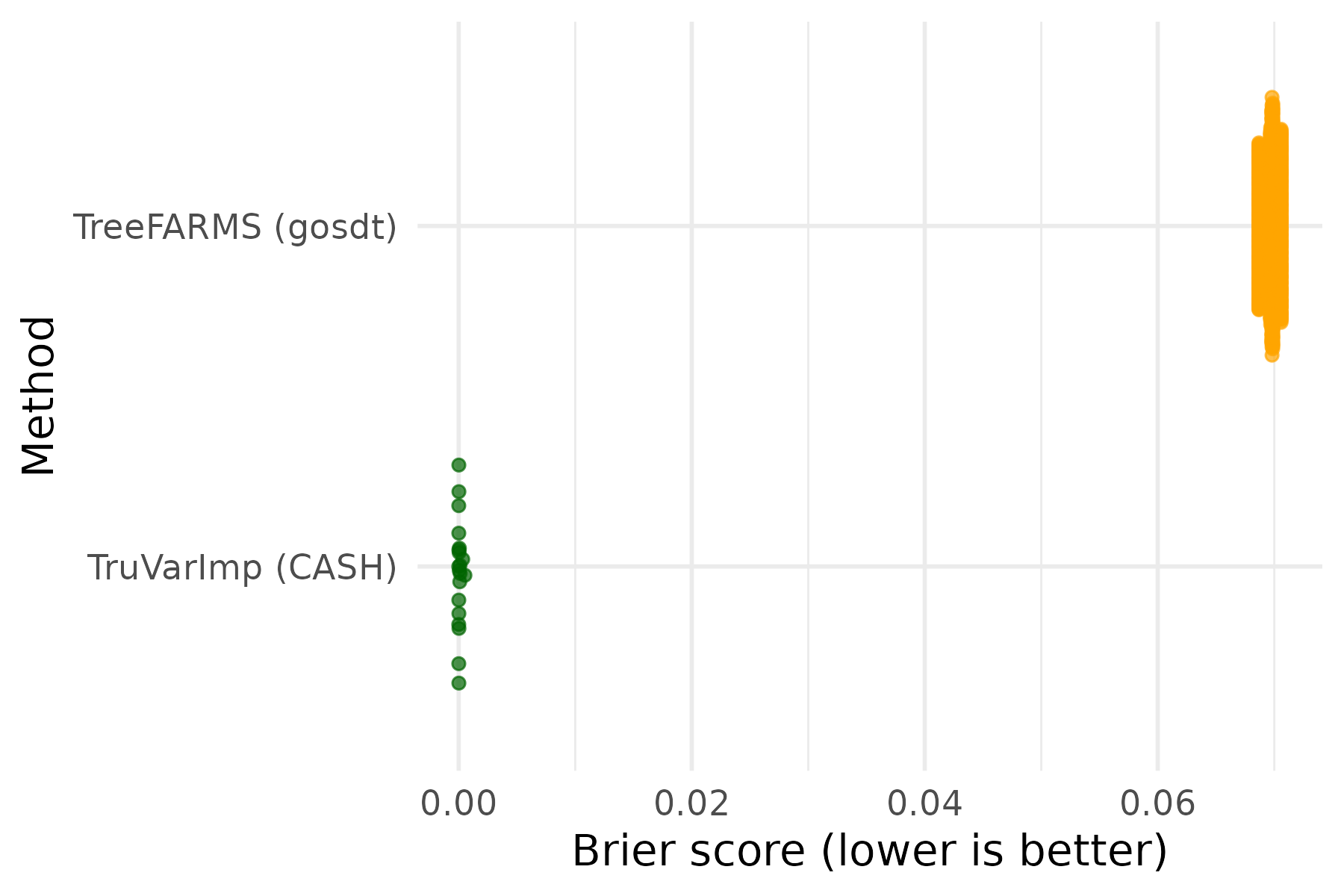}
        \label{fig:cr}
    \end{subfigure}
    \hfill
    \begin{subfigure}[b]{0.32\textwidth}
        \centering
        \includegraphics[width=\textwidth]{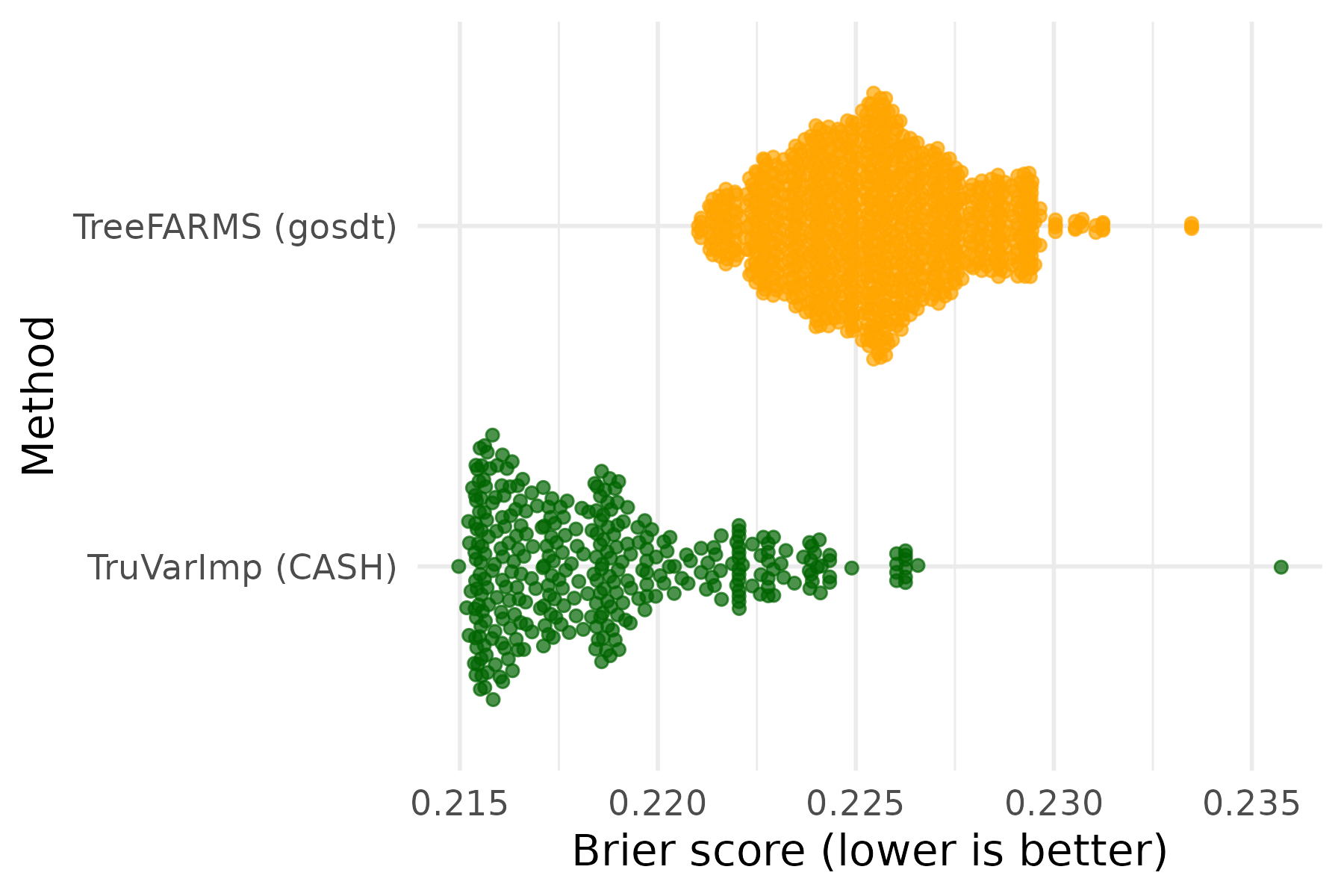}
        \label{fig:csbin}
    \end{subfigure}

    \vspace{1em}

    \begin{subfigure}[b]{0.32\textwidth}
        \centering
        \includegraphics[width=\textwidth, trim={0 0 0 1.7cm}, clip]{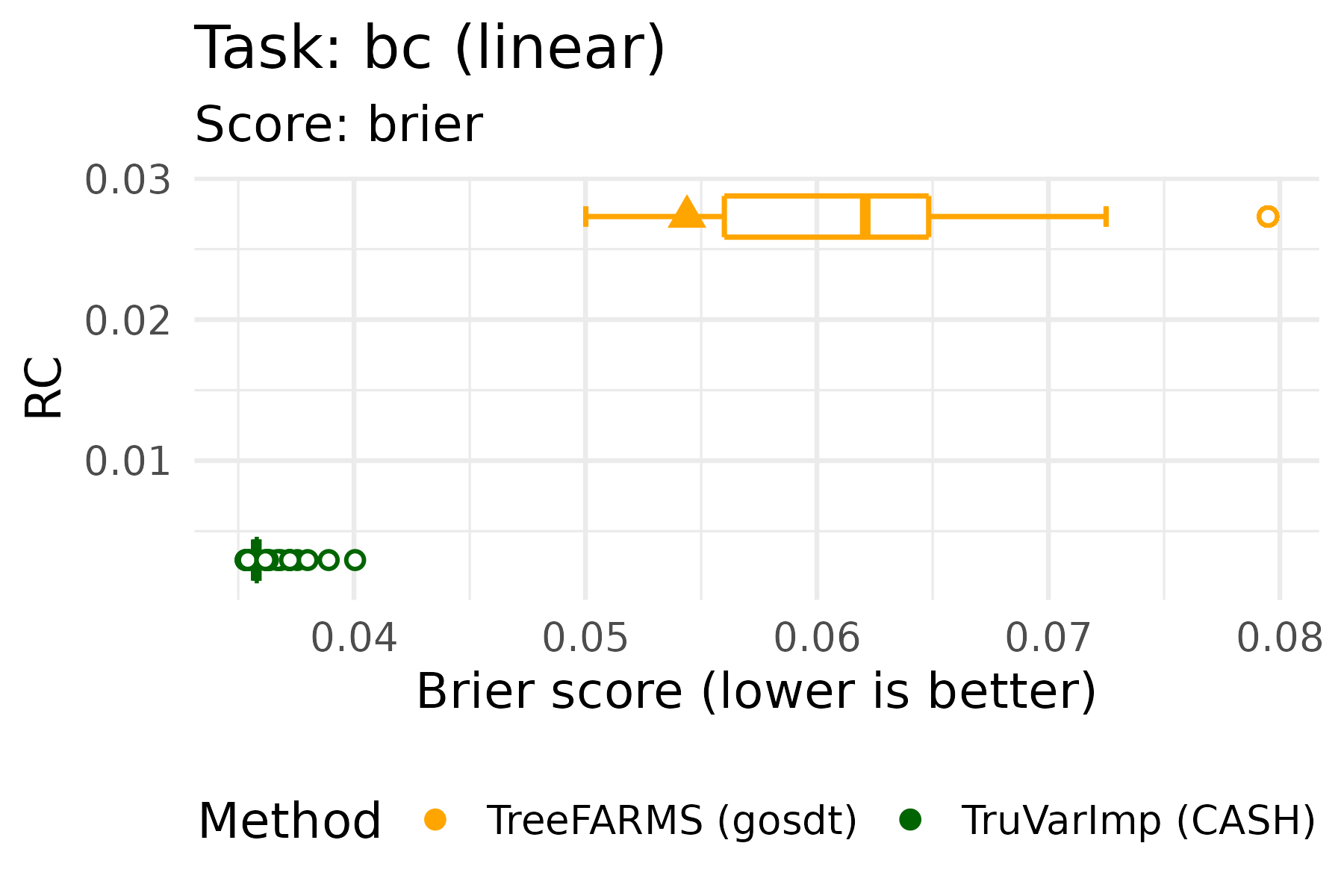}
        \caption{Task \texttt{BC}}
        \label{fig:bc-RC_vs_bestperf}
    \end{subfigure}
    \hfill
    \begin{subfigure}[b]{0.32\textwidth}
        \centering
        \includegraphics[width=\textwidth, trim={0 0 0 1.7cm}, clip]{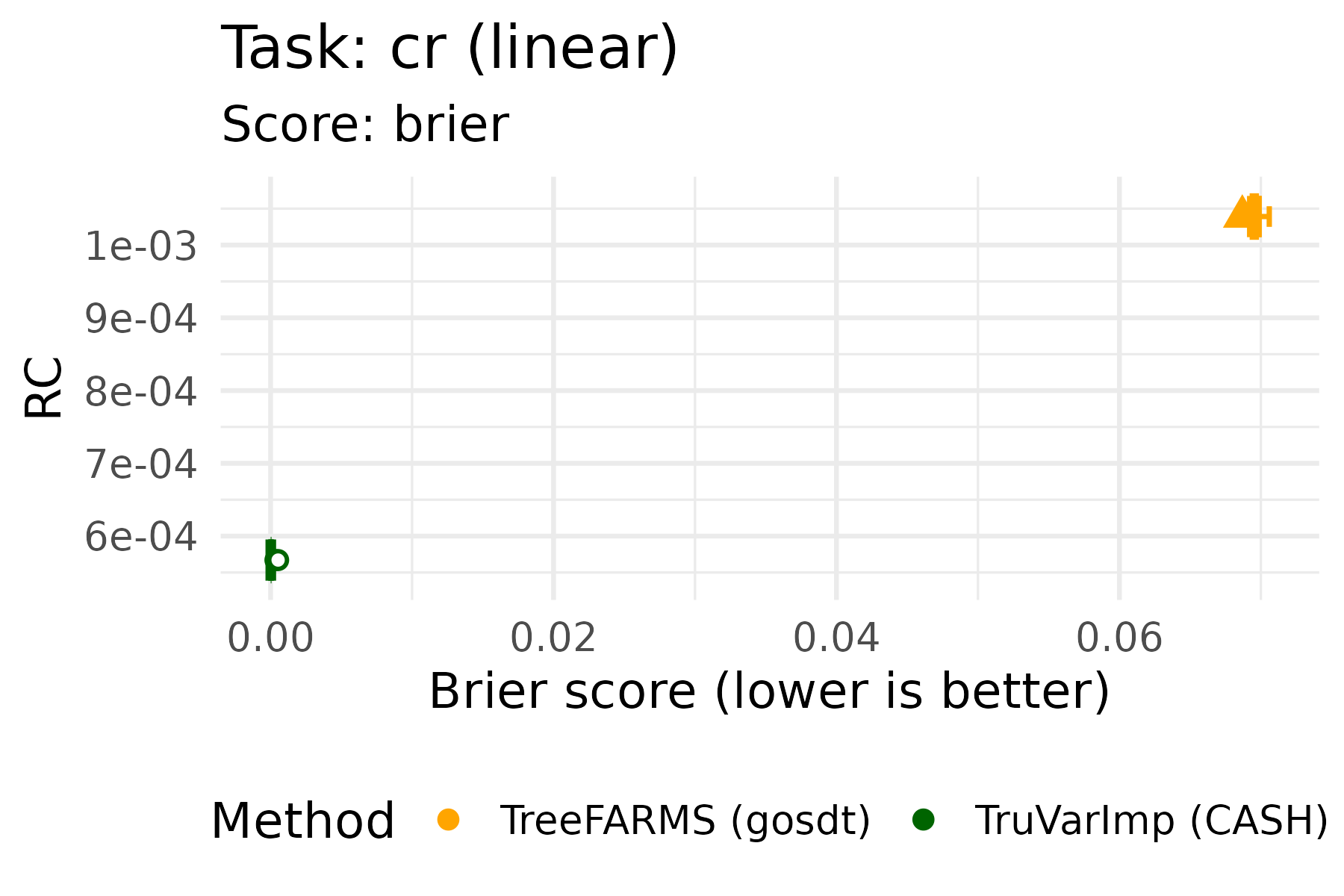}
        \caption{Task \texttt{CR}}
        \label{fig:cr-RC_vs_bestperf}
    \end{subfigure}
    \hfill
    \begin{subfigure}[b]{0.32\textwidth}
        \centering
        \includegraphics[width=\textwidth, trim={0 0 0 1.7cm}, clip]{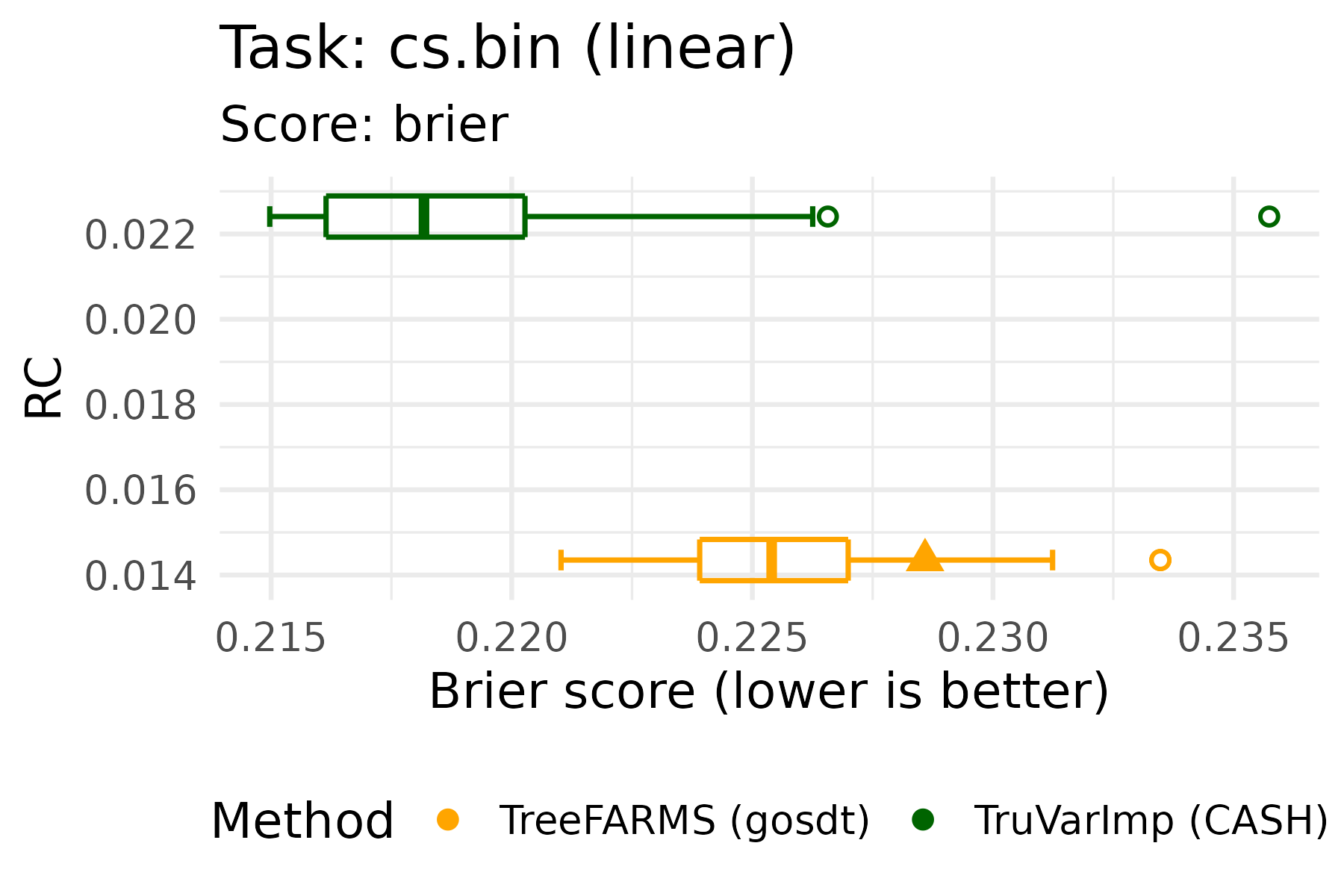}
        \caption{Task \texttt{CS} (binarized)}
        \label{fig:csbin-RC_vs_bestperf}
    \end{subfigure}

    \caption{Test performance and RC for tasks \texttt{BC, CR,} and \texttt{CS} (binarized) across the \TreeFARMS \RashomonSet{} and our \CashomonSet{}. Top row: Brier score distributions. Bottom row: RC (y-axis) versus Brier score (x-axis); horizontal boxplots summarize test score distributions; triangles indicate the reference model.}
    \label{fig_pred_perf_RC}
\end{figure}

\paragraph{Predictive Multiplicity.}
To analyze predictive multiplicity, we consider \emph{Rashomon capacity} (RC, \cite{hsu-neurips22a,hsu_arxiv24}), which measures the spread in predictions across models within a \RashomonSet{}.\footnote{We slightly adapt the original per-observation, classification-specific definition to measure predictive multiplicity across the entire dataset, and extend it to regression problems. The formal definition and a pseudocode for RC are in Appendix~\ref{appendix_RC}.}
It is non-trivial to trade off or compare RC versus predictive performance in evaluation.
On the one hand, since our goal is to assess predictive multiplicity of near-optimal models, we aim to find Rashomon or \CashomonSets{} with higher RC while achieving good predictive performance.
On the other hand, we expect that model sets containing less well-generalizing models have a higher RC.
What we can certainly say is this:
\begin{inparaenum}[\bf (1)]
    \item Not detecting what the optimal prediction model and its performance are, seems problematic. \label{enumlabela}
    \item Assuming (\ref{enumlabela}) is well-approximated, we want to know how much variability exists across the space of all admissible models.
\end{inparaenum}

The bottom row of Figure~\ref{fig_pred_perf_RC} plots the RC against the test performances for the \TreeFARMS \RashomonSet{} and our \CashomonSet{}.
The triangles indicate the reference model in each set.
Across all considered datasets, there is a tendency for the \TreeFARMS \RashomonSet{} to yield higher RC values than our \CashomonSets{}, however, at the cost of a higher Brier score.
We also observe that the reference models in the \TreeFARMS \RashomonSets{} perform worse on the test data than the ones in our \CashomonSets{}, especially when comparing the reference model's test performance to the remaining models within each set.
Notably, in task \texttt{CS}, our \CashomonSet{} not only contains better generalizing models than the \TreeFARMS \RashomonSet{}, but also results in a higher RC.

\paragraph{Feature Importance.}
Since similarly well-performing models may rely differently on features, we study the FI values induced by models in the \CashomonSet{} using \emph{variable importance clouds} (VICs, \cite{dong-nmi20a}), i.e., sets of FI vectors, one for each model in the set.
We quantify model reliance via permutation feature importance (PFI; \cite{breiman-mlj01a,fisher-jmlr19a}), estimated by averaging ten permutation runs and scaled such that they preserve the within-model ranking and relative importance of features while enabling comparisons across heterogeneous models and model classes.\footnote{For formal definitions of VIC and PFI see Appendix~\ref{appendix_vic_results}.}
\begin{figure}[tb]
    \centering
    \includegraphics[clip, trim=0cm 0.2cm 0cm 1.1cm, width = \linewidth]{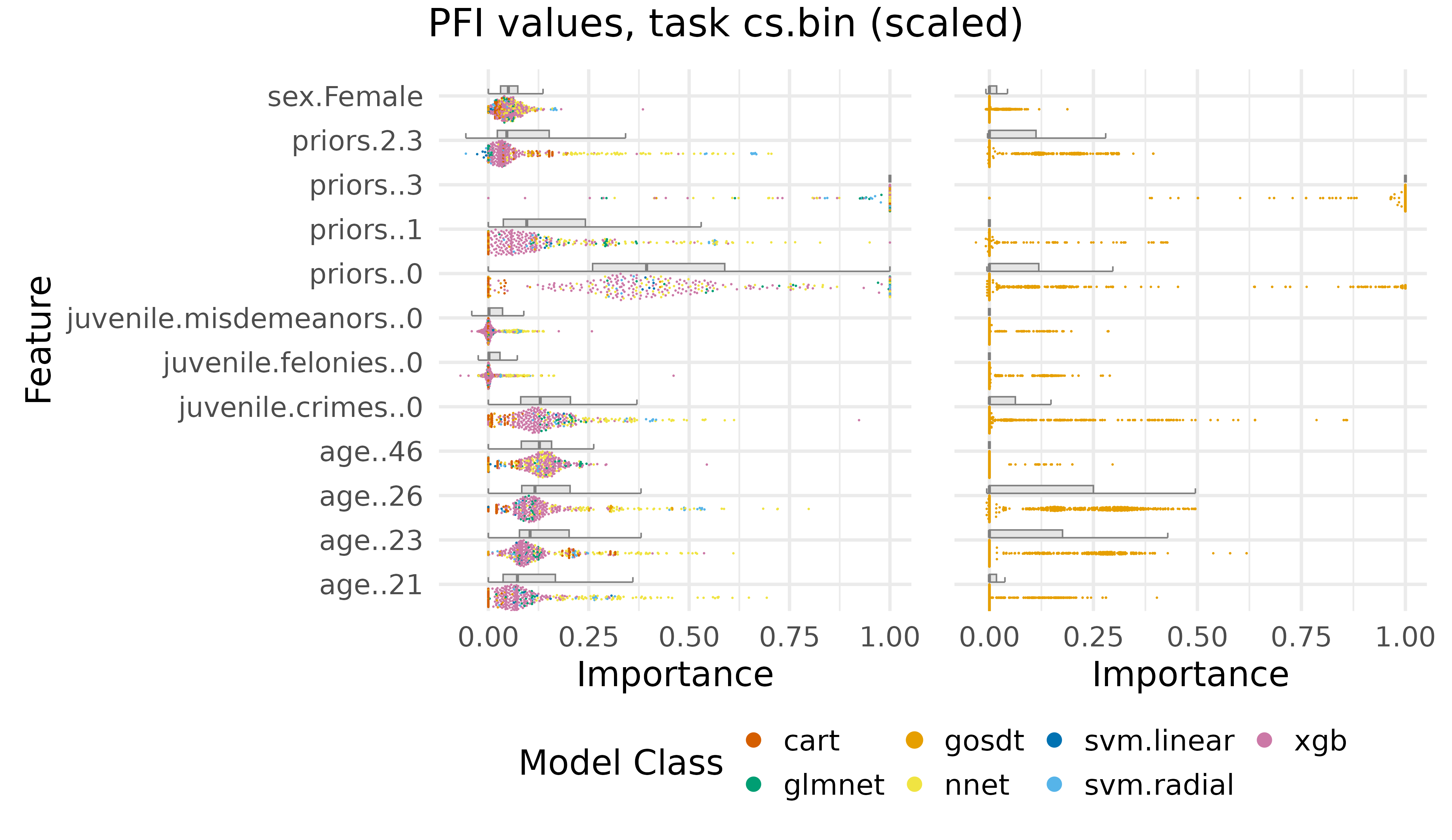}
    \caption{VICs for our \CashomonSet{} (left) and for the \TreeFARMS \RashomonSet{} (right) on task \texttt{CS}. For each feature, PFI values across models are shown as a boxplot together with a point cloud colored by model class, where identical values are vertically jittered. PFI values are scaled: the maximal PFI value for a model equals 1. In our \CashomonSet{}, most models are \texttt{xgb} models (178), followed by \texttt{nnet} (66) and \texttt{cart} (33) models, but all model classes are present.}
    \label{fig_vic}
\end{figure}
Figure~\ref{fig_vic} shows the VICs for the binarized \texttt{CS} dataset: the left panel (colored by model class) shows our \CashomonSet{}, the right shows the \TreeFARMS \RashomonSet{}.
In the right panel, variation across \texttt{gosdt} models already produces a notable spread in FI values, e.g., for \texttt{priors..3} and \texttt{priors..0}, the assigned importance varies across the models from \emph{not important} ($0$) to \emph{most important} ($1$).
The left panel reveals a second, between-class layer of variation since FI values scatter differently across model classes.
Moreover, the two model sets disagree markedly on individual features: e.g., for \texttt{priors..0}, most models in the \RashomonSet{} assign near-zero or near-one importance, whereas models in the \CashomonSet{} show a bulk in $[0.25,0.6]$.
This disagreement reveals that the choice of model class can substantially change which features the models rely on most; a pattern we observe across all datasets considered, though the degree of variation differs by task (see Appendix~\ref{appendix_vic_results}).
To the best of our knowledge, a further interpretation of FI results in a Rashomon context remains an open research area beyond their utility for model selection.

\section{Discussion, Limitations and Future Work} \label{sec_conclusion}

With the \CashomonSet{}, we present the first formal extension of the \RashomonSet{} across multiple model classes. 
Finding \CashomonSets{} can naturally be framed as an implicit level set estimation problem, for which we introduce the novel algorithm \TruVarImp and provide convergence guarantees.
In contrast to prior work, our approach is model-class-independent, unifies different model classes within a single \RashomonSet{} framework, and supports efficient search over the resulting joint hypothesis space.
Our experiments show that \TruVarImp efficiently identifies HPCs from the \CashomonSet{}. %

We investigate various properties of \CashomonSets{}, including predictive multiplicity using the Rashomon capacity and FI using VICs. 
Our analyses reveal a possible trade-off between predictive performance and predictive multiplicity, showing that \RashomonSet{} analyses can be misleading when the set contains poor predictors. 
This highlights the importance of constructing \RashomonSets{} with strong predictive performance, a goal that may be harder to achieve when the search is restricted to a single model class.
Moreover, FI values can differ substantially across model classes, implying that explanations derived from a single-class \RashomonSet{} may provide only a partial picture: they may miss important explanation variability and may change, once additional, equally competitive model classes are taken into account.

Despite these contributions, several limitations remain and motivate future work.
We have not investigated how the $\varepsilon_{\mathrm{abs}}$ and $\varepsilon_{\mathrm{rel}}$ affect the \CashomonSet{} composition; for very small cutoff values, the \CashomonSet{} is more likely to only contain a single model class, and there may be novel effects, e.g., higher RC, at larger cutoff values where many different model classes enter the \CashomonSet{}.
In particular, in certain scenarios, it might be preferable to find inherently interpretable models. 
Currently, this is only possible by restricting \TruVarImp by pre-selecting certain model classes. 
Future work could investigate other ways of biasing preferentially evaluating more desirable model classes, or generally a more diverse set of model classes. 
Finally, we have only started to explore applications of \CashomonSets{}.
The framework allows new analyses of predictive multiplicity, model selection under ambiguity, and robustness of FI scores across heterogeneous model classes.

\end{bibunit}
\clearpage
\appendix
\begin{bibunit}[splncs04]

\section*{Declaration of GenAI usage}
We used generative AI tools for improving the text, methodology, and code. Concretely, we used the following tools for the following workloads:
\begin{enumerate}
    \item text: we used ChatGPT, Codex CLI and Claude Code to suggest writing improvements and shortening text.
    \item methodology: we used Claude Code and Codex-CLI to suggest improvements to the \TruVarImp R implementation.
    The algorithm itself was human-authored.
    ChatGPT and Gemini were involved in generating the algorithm and implementation for our version of the Rashomon Capacity.
    \item code: we used Claude code, Codex, ChatGPT, and Gemini to generate, improve, and review experiment code.
\end{enumerate}
We have manually checked all generated output, and take full responsibility for the content.

\section{Illustrative Example} \label{appendix_RS_HPO_viz}

To illustrate the relationship between \RashomonSets{} and models obtained via HP variation, we consider a simple regression problem.
Data are drawn from the data-generating process $\yv=X_1+X_2+\epsilon$ and modeled using a linear model with covariates $X_1$ and $X_2$ and without intercept.
Figure~\ref{fig_example_ridge_regression} shows, for one specific random sample of size 30, the true-risk \RashomonSet{}, the empirical-risk \RashomonSet{}, and the set of models obtained by fitting an elastic-net model over HP values.
The overlap indicates the part of the empirical \RashomonSet{} achievable through HP variation, $$\Rset(\varepsilon=0.15,\fref=\hat{f^{\star}},\Hspace^{\D}_\text{HPO}).$$
Data was drawn from the distributions $$\epsilon\sim\normal(0, 0.5^2) \text{ and } \begin{pmatrix}X_1\\X_2\end{pmatrix}\sim\normal\left(\begin{pmatrix}0\\0\end{pmatrix},\begin{pmatrix}1&2\\2&5\end{pmatrix}\right)$$.

\begin{figure}[H]
    \centering
      \includegraphics[width=.8\linewidth]{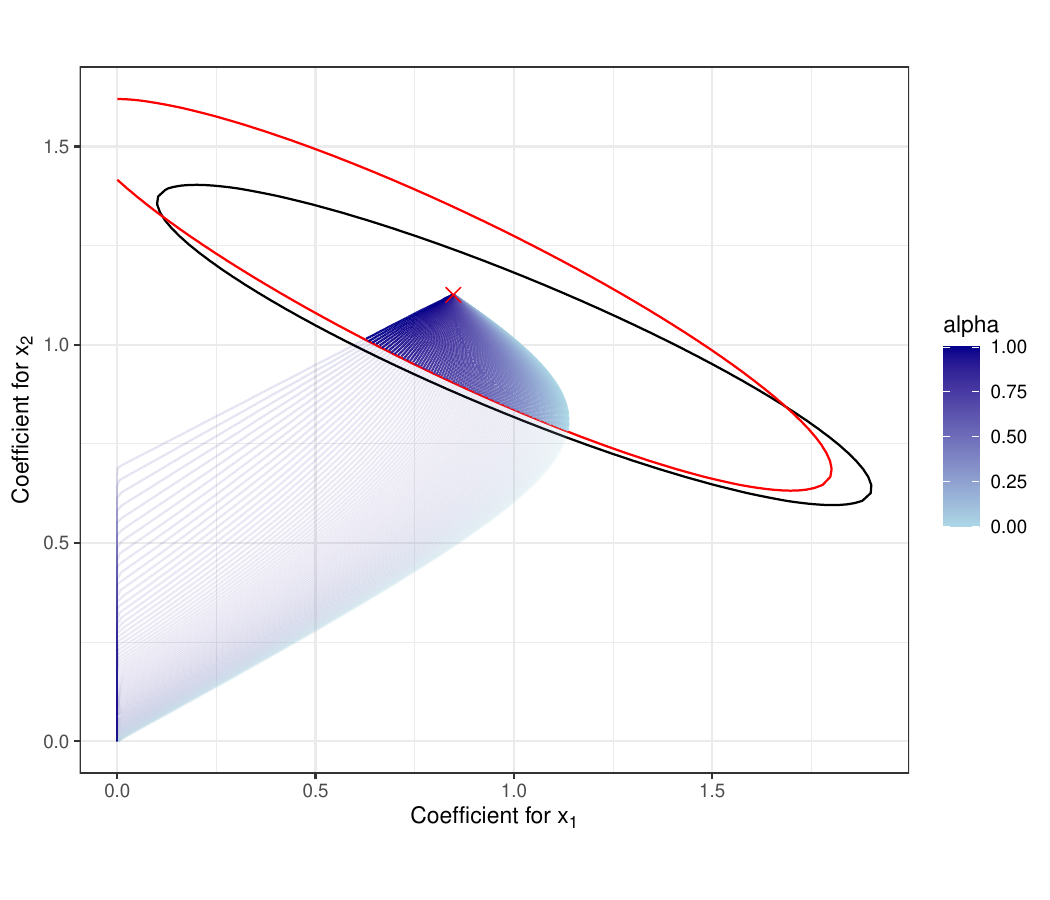}
    \caption{Visualization of \RashomonSets{} on a simple regression problem.
    Every coordinate on the plot corresponds to a linear model with given coefficients.
    The \enquote{true} model $f^{\star}$ lies at $(1, 1)$ and has RMSE $0.5$.
    The black ellipse indicates the \RashomonSet{} when taking $\fref=f^{\star}$, $\varepsilon=0.15$, and the true risk $\risk$.
    The red cross indicates $\hat{f^{\star}}$, a model that minimizes the empirical risk $\riske$, and the red ellipse represents the \RashomonSet{} when using this model as reference.
    The shaded blue area represents $\Hspace^{\D}_\text{HPO}$ as defined in Eq.~\eqref{eq_F_HP}, the models resulting from fitting an elastic net model (using the \texttt{glmnet} R-package \cite{zou-jrsssb05a}) and parameterized with the \texttt{lambda} and \texttt{alpha} HPs, on the specific available dataset.
    The shade of blue indicates the value of \texttt{alpha}; \texttt{lambda} is not shown (though models with larger \texttt{lambda} values lie closer to the origin, corresponding to the regularizing effect of \texttt{lambda}).
    The shaded area within the red ellipse is the intersection of the \RashomonSet{} and the models achievable through HP variation.
    }
    \label{fig_example_ridge_regression}
\end{figure} 
\section{Datasets}\label{app:datasets}

We consider the Bike Sharing (\texttt{BS}) dataset \cite{fanaee-pai14}, predicting the hourly count of rented bikes of the Capital Bike Sharing system in Washington, D.C.; the COMPAS (\texttt{CS}) dataset \cite{angwin2016machine}, predicting recidivism of defendants; and the German Credit (\texttt{GC}) dataset \cite{data-gc}, predicting good vs.\ bad credit risks.

Furthermore, we consider various binarized datasets, as the implementation of GOSDT only works on such ~\cite{lin-icml20a}. 
Concretely, we use the following datasets as provided in the experiments GitHub repository of Xin et al.~\cite{xin-neurips22a}: car evaluation database (\texttt{CR}, \cite{cardatasett}), \emph{Monk~2} (\texttt{MK}, \cite{monk2dataset}), \emph{COMPAS} (binarized), and \emph{FICO} (\texttt{FC})\footnote{There is no published reference for this dataset, also, the official competition website is no longer online. 
We refer to \href{https://openml.org/d/45553}{https://openml.org/d/45553} for further details, and obtain the dataset from OpenML~\cite{bischl-pattern25a}\label{footnote-fico}.}. 
Furthermore, use the \emph{breast cancer} (\texttt{BC}, \cite{breastcancerdataset}) dataset,  manually binarized as described by Xin et al.~\cite{xin-neurips22a}.

Finally, we analyze a synthetically generated dataset (\texttt{ST}) proposed by Ewald et al.~\cite{Ewald-xAI24a}, for which concrete expectations regarding PFI values can be formulated.
\texttt{ST} consists of five features $X_1, \dots, X_5$ and target~$Y$. 
$X_1, X_3$ and $X_5$ are i.i.d. $N(0,1)$; $X_2$ and $X_4$ are (noisy) copies of $X_1$ and $X_3$, respectively: $X_2~:=~X_1~+~\epsilon_2$, $\epsilon_2~\sim~N(0, 0.001)$; $X_4~:=~X_3~+~\epsilon_4$, $\epsilon_4~\sim~N(0, 0.1)$.
The target only depends on $X_4$ and $X_5$ via linear effects and a bivariate interaction:
\begin{align}\label{eq_st_dgp}
    Y := X_4 + X_5 + X_4 \times X_5 + \epsilon_Y,\, \epsilon_Y \sim N(0, 0.1).
\end{align}
We evaluated \texttt{BS} and \texttt{ST} as regression tasks, and \texttt{CS} and \texttt{GC} as binary classification tasks, using the RMSE and the Brier score \cite{brier1950verification} as the performance objectives, respectively. 
More information about the datasets is provided in Table~\ref{tbl:datasets}.

\begin{table}[H]
\centering
\caption{Overview of the datasets used in the experiments. We abbreviate the columns as \#Num for number of numeric features, \#Cat for number of categorical features, and $N$ for total number of samples.}
\label{tbl:datasets}
\begin{tabular}{l l r r r l l}
\toprule
Name & Description & $N$ & \#Num & \#Cat & Type & Reference \\
\midrule
\texttt{BC} & Breast Cancer dataset & 683 & 9 & 0 & classification & \cite{breastcancerdataset} \\
\texttt{CR} & Car dataset & 1728 & 15 & 0 & classification & \cite{cardatasett} \\
\texttt{CS} & COMPAS dataset & 6172 & 5 & 6 & classification & \cite{angwin2016machine} \\
\texttt{CS} (binarized) & binarized \texttt{CS} & 6907 & 12 & 0 & classification & \cite{angwin2016machine} \\
\texttt{FC} & FICO dataset & 10459 & 17 & 0 & classification & footnote \ref{footnote-fico} \\
\texttt{GC} & German Credit dataset & 1000 & 3 & 18 & classification & \cite{data-gc} \\
\texttt{MK} & Monk 2 dataset & 169 & 11 & 0 & classification & \cite{monk2dataset} \\
\texttt{BS} & Bike Sharing dataset & 17379 & 9 & 4 & regression & \cite{fanaee-pai14} \\
\texttt{ST} & Synthetic dataset & 10000 & 5 & 0 & regression & \cite{Ewald-xAI24a} \\
\bottomrule
\end{tabular}
\end{table}

\section{Model Classes and HP Spaces}
\label{app:learners}

We used six different model classes in our experiments to cover a wide range of model types. 
Below, we list each model class, the software (R) packages used, and any preprocessing performed. 
The HP search space for each learning algorithm is provided in separate tables.
The search spaces roughly follow Binder et al.~\cite{binder-automl20a}, with some HP ranges expanded slightly to avoid too many cases of optima lying out of the search constraints, and others reduced if they are known to have little effect on performance or if their evaluation has an outsized effect on compute cost.
\enquote{(log)} indicates the HP was sampled on a log-transformed scale.
HPs other than those tuned or specified in the description are set to their default values.
Preprocessing was done using the \texttt{mlr3pipelines} \cite{mlr3pipelines} package.

\paragraph{XGBoost (\texttt{xgb}).}
Using the \texttt{xgboost} package \cite{chen-kdd16a}.
Factorial features were treatment-encoded.

\begin{table}[H]
\centering
\caption{HP search space for \texttt{xgb}.}
\begin{tabular}{lll}
\hline
HP & Type & Range \\
\hline
\texttt{alpha} & continuous & [\(10^{-3}, 10^3\)] (log) \\
\texttt{colsample\_bylevel} & continuous & [0.1, 1] \\
\texttt{colsample\_bytree} & continuous & [0.1, 1] \\
\texttt{eta} & continuous & [\(10^{-4}, 1\)] (log) \\
\texttt{lambda} & continuous & [\(10^{-3}, 10^3\)] (log) \\
\texttt{max\_depth} & integer & [1, 20] (log) \\
\texttt{nrounds} & integer & [1, 5000] (log) \\
\texttt{subsample} & continuous & [0.1, 1] \\
\hline
\end{tabular}
\label{tbl:xgb_hparams}
\end{table}

\paragraph{Decision Tree (\texttt{cart}).}
Using the \texttt{rpart} package \cite{Therneau1999}.
\begin{table}[H]
\centering
\caption{HP search space for \texttt{cart}.}
\label{tbl:tree_hparams}
\begin{tabular}{lll}
\hline
HP & Type & Range \\
\hline
\texttt{cp} & continuous & [\(10^{-4}, 0.2\)] (log) \\
\texttt{minbucket} & integer & [1, 64] (log) \\
\texttt{minsplit} & integer & [2, 128] (log) \\
\hline
\end{tabular}
\end{table}

\paragraph{Neural Network (\texttt{nnet}).}
Implemented via the \texttt{nnet} package \cite{Ripley2009}, with \texttt{maxit} set to 5000 and \texttt{MaxNWts} set to $10^6$.
Continuous features are preprocessed by centering and scaling to unit variance.
\begin{table}[H]
\centering
\caption{HP search space for \texttt{nnet}.}
\label{tbl:nnet_hparams}
\begin{tabular}{lll}
\hline
HP & Type & Range \\
\hline
\texttt{decay} & continuous & [\(10^{-6}, 1\)] (log) \\
\texttt{size} & integer & [8, 512] (log) \\
\texttt{skip} & categorical & \{TRUE, FALSE\} \\
\hline
\end{tabular}
\end{table}

\paragraph{Elastic Net (\texttt{glmnet}).}
Using the \texttt{glmnet} package \cite{zou-jrsssb05a}. 
Factorial features were treatment-encoded.
\begin{table}[H]
\centering
\caption{HP search space for \texttt{glmnet}.}
\label{tbl:glmnet_hparams}
\begin{tabular}{lll}
\hline
HP & Type & Range \\
\hline
\texttt{alpha} & continuous & [0, 1] \\
\texttt{lambda} & continuous & [\(10^{-4}, 10^3\)] (log) \\
\hline
\end{tabular}
\end{table}

\paragraph{Support Vector Machine (\texttt{svm}).}
Using the \texttt{e1071} package \cite{Meyer1999}, with \texttt{tolerance} set to $10^{-4}$ and setting \texttt{type} \texttt{"eps-regression"} for regression and \texttt{"C-classification"} for classification.
Factorial features were treatment-encoded, and constant features were removed if they were encountered in any cross-validation fold.
\begin{table}[H]
\centering
\caption{HP search space for \texttt{svm}.}
\label{tbl:svm_hparams}
\begin{tabular}{lll}
\hline
HP & Type & Range \\
\hline
\texttt{cost} & continuous & [\(10^{-4}, 10^4\)] (log) \\
\texttt{gamma} & continuous & [\(10^{-4}, 10^4\)] (log) \\
\texttt{kernel} & categorical & \{linear, radial\} \\
\hline
\end{tabular}
\end{table}

\paragraph{Generalized Optimal Sparse Decision Tree (\texttt{gosdt}).}
Using the Python \texttt{gosdt} package \cite{lin-icml20a}, with \texttt{allow\_small\_reg = TRUE} to allow \texttt{regularization} values below $1/n$.
We use a wrapper around the original Python package to extend it for probability predictions.  
\begin{table}[H]
\centering
\caption{HP search space for \texttt{gosdt}.}
\label{tbl:gosdt_hparams}
\begin{tabular}{lll}
\hline
HP & Type & Range \\
\hline
\texttt{regularization} & continuous & [\(10^{-4}, 0.49\)] (log) \\
\texttt{balance} & categorical & \{TRUE, FALSE\} \\
\texttt{depth\_budget} & integer & [1, 100] (log) \\
\hline
\end{tabular}
\end{table} 

\section{\TruVarImp Algorithm and Analysis}\label{sec:analysis}

\newcommand{\cc}{\operatorname{cost}}
\newcommand{\actualgreekxi}{\text{\usefont{OML}{cmm}{m}{it}\char24}}  %

Our theoretical analysis generally follows Bogunovic et al.~\cite{bogunovic-nips16a} with slightly different notation and the necessary adjustments for the modified acquisition function.

Central to the theory behind both \TruVar and \TruVarImp is the following probabilistic event $\mathcal{E}$:
\begin{equation}
    \label{eq:good-confidence-event}
    \mathcal{E} := \left\{ c(\lambda) \in [l_t(\lambda), u_t(\lambda)] \text{ for all $t$ and, given that $t$, for all } \lambda \in U_{t-1} \cup M_{t-1} \right\}.
\end{equation}
The aim is to choose $\beta_{(i)}$ large enough for a desired probability $\mathds{P}(\mathcal{E})$.
This builds upon Lemma~5.1 from~\cite{srinivas2012information}, which applies a union bound over all points in a candidate set $D$ and all time steps $t$, and makes use of the fact that the normal distribution has exponentially decaying tails, as well as $\sum_{t\ge 1} t^{-2} = \pi^2/6$. The underlying assumption is that the objective $c(\cdot)$ is drawn from the GP and observed with i.i.d.\ Gaussian noise. We state the lemma as:
\begin{lemma}[Srinivas et al., 2012]
    \label{lemma:srinivas2012information}
    For a given $\delta \in (0,1)$ and a candidate set $D$, let $\beta_t \ge 2 \log \frac{|D|t^2\pi^2}{6\delta}$. Then with probability at least $1-\delta$, the true $c(\lambda)$ is in $[ \mu_t(\lambda) - \beta_t^{1/2}\sigma_t(\lambda), \mu_t(\lambda) + \beta_t^{1/2}\sigma_t(\lambda)]$ for all $\lambda \in D$.
\end{lemma}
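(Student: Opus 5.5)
We follow the standard argument of Srinivas et al.: bound the probability that $c(\lambda)$ leaves its interval at one fixed pair $(\lambda,t)$, then take a union bound over all $\lambda\in D$ and all $t\ge 1$. The weights in $\beta_t$ are chosen so that this union bound sums to at most $\delta$. We read the statement as holding simultaneously for all $t$, which is how the event $\mathcal{E}$ in \eqref{eq:good-confidence-event} uses it.

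\textbf{Step 1 (per-point Gaussian tail).} Fix $t$. By assumption $c$ is drawn from the GP prior and observed with independent Gaussian noise. Conditioned on the history $\lambda_1,\hat c_1,\dots,\lambda_{t}, \hat c_t$, the value $c(\lambda)$ is therefore distributed as $\normal(\mu_t(\lambda),\sigma_t^2(\lambda))$, with $\mu_t,\sigma_t$ given by \eqref{eq:mu_update}--\eqref{eq:sigma_update}. This holds even though the $\lambda_s$ are chosen adaptively, because each $\lambda_s$ is a deterministic function of earlier observations. For $Z\sim\normal(0,1)$ and $s>0$ we use the tail bound $\P(|Z|>s)\le e^{-s^2/2}$. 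This follows from $\P(Z>s)=e^{-s^2/2}\int_0^\infty e^{-(z^2/2+sz)}\,dz/\sqrt{2\pi}\le \tfrac12 e^{-s^2/2}$. Setting $s=\beta_t^{1/2}$ and taking expectation over the history gives
\[
\P\bigl(|c(\lambda)-\mu_t(\lambda)|>\beta_t^{1/2}\sigma_t(\lambda)\bigr)\le e^{-\beta_t/2}.
\]
If $\sigma_t(\lambda)=0$, then $c(\lambda)=\mu_t(\lambda)$ almost surely and the bound is trivial.

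\textbf{Step 2 (union bound).} A union bound over all $\lambda\in D$ and all $t\ge1$ bounds the failure probability by $\sum_{t\ge1}|D|e^{-\beta_t/2}$. With $\beta_t\ge 2\log\frac{|D|t^2\pi^2}{6\delta}$ we get $|D|e^{-\beta_t/2}\le \frac{6\delta}{\pi^2 t^2}$. Summing with $\sum_t t^{-2}=\pi^2/6$ then gives a total failure probability of at most $\delta$. The complementary event is exactly the claim.

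\textbf{Main obstacle.} The only delicate point is Step 1 under adaptive sampling. The conditioning must be on the full history, so that $\mu_t$ and $\sigma_t$ are measurable with respect to it while $c(\lambda)$ retains its Gaussian posterior. We then apply the tower property to pass to the unconditional probability. Heteroscedastic noise causes no difficulty, since $\mathbf{\Sigma}_t$ is known and the posterior stays Gaussian. We do not need the posterior to be computed with the correct kernel at each step. The lemma is stated for the exact GP model, and the implementation note on refitted kernels lies outside the theorem's assumptions.
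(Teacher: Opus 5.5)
Your proposal is correct and takes the same route as the paper, which cites Lemma~5.1 of Srinivas et al.: a Gaussian tail bound $\P(|Z|>s)\le e^{-s^2/2}$ at each fixed pair $(\lambda,t)$, conditioned on the history, followed by a union bound over $\lambda\in D$ and $t\ge 1$ that uses $\sum_{t\ge1}t^{-2}=\pi^2/6$. You read the statement as holding simultaneously for all $t$ and handle adaptive sampling by conditioning on the full history, which is exactly how the paper uses the lemma to obtain $\mathcal{E}$.
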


The following Theorem~\ref{thm:truvarimp-analysis} works by choosing $\beta_{(i)}$ that fulfills this condition based on a sequence $C_{(i)}$ that is a deterministic (but objective-function dependent) upper bound on the total cost spent during each epoch $i$.
This also provides an upper bound on the total cost spent until a given level of accuracy is achieved.

Algorithm~\ref{alg:truvarimp} iteratively refines $M_t$ so that (under $\mathcal{E}$) it always contains the true minimizer of $\objfun(\cdot)$.
Based on the resulting possible range of minima $[\min_{\lambda \in M_{t-1}} l_t(\lambda), \max_{\lambda \in M_{t-1}} u_t(\lambda)]$, it uses $\varepsilon_{\mathrm{rel}}$ and $\varepsilon_{\mathrm{abs}}$ to calculate bounds for $h$ such that $h\in [h_t^\mathrm{opt}, h_t^\mathrm{pes}]$.
$H_t$ and $L_t$ are updated with all points for which the confidence interval does not overlap with $[h_t^\mathrm{opt}, h_t^\mathrm{pes}]$.

Like \TruVar, \TruVarImp proceeds in epochs where the confidence of points in unclassified sets is refined, but since the uncertainty about the minimum has an effect of size $(1 + \varepsilon_{\mathrm{rel}})$ larger than the uncertainty about the objective function values in $U_t$, the uncertainty is upweighted by this factor both in the acquisition function (Line~\ref{alg:acquisition}) and in the epoch progression condition (Line~\ref{alg:epoch}).

\subsection{Setup and Definitions}
We first show that $c_{\mathrm{min}}^{\mathrm{pes}}$, $h_t^\mathrm{pes}$ and $h_t^\mathrm{opt}$ are good bounds for the minimum and LSE threshold, respectively. 
\begin{lemma}\label{lemma:boxing}
    Let $\lambda^{*}\in\argmin_{\lambda\in\candset}c(\lambda)$.
    Under event $\mathcal{E}$, $\lambda^{*}$ stays in $M_t$, and $h_t^\mathrm{pes} \ge h \ge h_t^\mathrm{opt}$ for all $t$.
    No point $\lambda$ s.t.\ $c(\lambda)>h$ enters $L_t$, no point $\lambda$ s.t.\ $c(\lambda)<h$ enters $H_t$.
\end{lemma}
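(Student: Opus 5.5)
The plan is an induction on $t$ under event $\mathcal{E}$, with the invariant that $\lambda^{*}\in M_{t-1}\cap(U_{t-1}\cup L_{t-1})$. The base case holds because $M_0=U_0=\candset$.

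\textbf{Step 1: $\lambda^*$ stays in $M_t$.} Suppose $\lambda^*\in M_{t-1}$. Since $M_{t-1}\subseteq U_{t-1}\cup L_{t-1}$, we only need $\mathcal{E}$ to give confidence bounds on $M_{t-1}$, so for every $\evalpointbar\in M_{t-1}$ we have $l_t(\evalpointbar)\le c(\evalpointbar)\le u_t(\evalpointbar)$. Then
\[
l_t(\lambda^*)\le c(\lambda^*)\le c(\evalpointbar)\le u_t(\evalpointbar) \quad \text{for all } \evalpointbar\in M_{t-1},
\]
so $l_t(\lambda^*)\le \min_{\evalpointbar\in M_{t-1}}u_t(\evalpointbar)=\objfun_{\mathrm{min},t}^\mathrm{pes}$. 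The test in line~\ref{alg:mloop} therefore keeps $\lambda^*$ in $M_t$. This step also shows $\objfun_{\mathrm{min},t}^\mathrm{pes}\ge c(\lambda^*)=\objfun_{\mathrm{min}}$.

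\textbf{Step 2: threshold sandwich.} Because $\lambda^*\in M_{t-1}$, we get
\[
\min_{\evalpointbar\in M_{t-1}} l_t(\evalpointbar)\le l_t(\lambda^*)\le c(\lambda^*).
\]
The map $x\mapsto x(1+\varepsilon_{\mathrm{rel}})+\varepsilon_{\mathrm{abs}}$ is monotone since $\varepsilon_{\mathrm{rel}}\ge 0$. Applying it to this inequality and to the bound from Step 1 gives $h_t^\mathrm{opt}\le h\le h_t^\mathrm{pes}$.

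\textbf{Step 3: correct classification.} Points only enter $L_t$ or $H_t$ from $U_{t-1}$, and $\mathcal{E}$ covers $U_{t-1}$. If $\evalpoint$ enters $L_t$, then $c(\evalpoint)\le u_t(\evalpoint)\le h_t^\mathrm{opt}\le h$, so $c(\evalpoint)\le h$, which excludes $c(\evalpoint)>h$. If $\evalpoint$ enters $H_t$, then $c(\evalpoint)\ge l_t(\evalpoint)>h_t^\mathrm{pes}\ge h$, which excludes $c(\evalpoint)<h$. Points already in $L_{t-1}$ or $H_{t-1}$ are never moved, so the claim holds for all $t$ by induction.

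\textbf{Closing the invariant.} We still need $\lambda^*\notin H_t$, so that $\lambda^*$ remains in $U_t\cup L_t$, where $\mathcal{E}$ continues to apply. This follows from Step 3: $c(\lambda^*)\le h$ always, since $\varepsilon_{\mathrm{rel}},\varepsilon_{\mathrm{abs}}\ge 0$ and $c\ge 0$ in the relative case. If $c(\lambda^*)<h$, Step 3 rules out $H_t$ directly. If $c(\lambda^*)=h$, the strict inequality $l_t(\lambda^*)>h_t^\mathrm{pes}\ge h$ would contradict $l_t(\lambda^*)\le c(\lambda^*)=h$. Either way $\lambda^*$ cannot enter $H_t$.

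The main subtlety is exactly this: $\mathcal{E}$ only controls points in $U_{t-1}\cup M_{t-1}$. That is why the invariant must carry $\lambda^*\in M_{t-1}$ forward at each step rather than relying on $\mathcal{E}$ covering all of $\candset$. The rest is routine monotonicity.
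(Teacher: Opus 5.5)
Your proof is correct and follows the paper's own route: induction on $\lambda^*\in M_{t-1}$, with the same order-preserving affine map giving the sandwich $h_t^{\mathrm{opt}}\le h\le h_t^{\mathrm{pes}}$. Your chain $l_t(\lambda^*)\le c(\lambda^*)\le c(\lambda')\le u_t(\lambda')$ is cleaner than the paper's, which miswrites $u_t(\lambda^*)=c_{\mathrm{min}}$, and your Step~3 supplies the classification claim that the paper only checks inside the proof of Theorem~\ref{thm:truvarimp-analysis}; the extra conjunct $\lambda^*\notin H_t$ is unnecessary, since $\mathcal{E}$ covers $M_{t-1}$ directly (and does not cover $L_t$ as such).
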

\begin{proof}[by induction]
\begin{enumerate}
    \item Since $M_0=\candset$, $\lambda^{*}$ is clearly in $M_0$.
    \item Assume that $\lambda^{*}\in M_{t-1}$. Then
    \[
        \min_{\lambda\in M_{t-1}} l_t(\lambda)
        \le l_t(\lambda^{*})
        \le c(\lambda^{*})
        = c_{\mathrm{min}}.
    \]
    Since $\varepsilon_{\mathrm{rel}}\ge 0$, multiplying by
    $1+\varepsilon_{\mathrm{rel}}$ and adding $\varepsilon_{\mathrm{abs}}$
    preserves order, so
    \begin{align*}
        \min_{\lambda\in M_{t-1}} l_t(\lambda)\times
        \bigl(1+\varepsilon_{\mathrm{rel}}\bigr)
        + \varepsilon_{\mathrm{abs}}
        &\le
        c_{\mathrm{min}}\times
        \bigl(1+\varepsilon_{\mathrm{rel}}\bigr)
        + \varepsilon_{\mathrm{abs}},
    \end{align*}
    and hence $h_t^{\mathrm{opt}}\le h$.

    Likewise, for every $\lambda\in M_{t-1}$,
    \[
        u_t(\lambda) \ge c(\lambda)\ge c_{\mathrm{min}},
    \]
    so
    \[
        c_{\mathrm{min},t}^{\mathrm{pes}}
        = \min_{\lambda\in M_{t-1}} u_t(\lambda)
        \ge c_{\mathrm{min}},
    \]
    and the same affine shift yields $h_t^{\mathrm{pes}} \ge h$.

    Finally,
    \[
        l_t(\lambda^{*})\le u_t(\lambda^{*}) = c_{\mathrm{min}} \le c_{\mathrm{min},t}^{\mathrm{pes}}\text{ from above,}
    \]
    so the condition on Line~\ref{alg:mloop} of
    Algorithm~\ref{alg:truvarimp} is always met. Hence
    $\lambda^{*} \in M_t$.
\end{enumerate}\qed
\end{proof}
\begin{lemma}
    
\end{lemma}

In order to construct the sequence $C_{(i)}$, we first construct a sequence of ``hard sets'' which are independent of $\lambda_t$ but depend on the (unknown) objective function:
\begin{eqnarray*}
    \overline{U}_{(i)} &:= &\left\{ \lambda \in \candset : |c(\lambda) - h| \le 4(1+\deltabar)\eta_{(i)} \right\},\\
    \overline{M}_{(i)} &:= &\left\{ \lambda \in \candset : c(\lambda) \le \objfun_{\mathrm{min}} + 4\frac{1 + \deltabar}{1+\varepsilon_{\mathrm{rel}}}\eta_{(i)} \right\},
\end{eqnarray*}
and $\overline{U}_{(0)} := \overline{M}_{(0)} := \candset$.

\begin{lemma}\label{lemma:hard-sets}
    For all $t$ after epoch $i$ has ended, and on the event $\mathcal{E}$, we have $U_t \subseteq \overline{U}_{(i)}$ and $M_t \subseteq \overline{M}_{(i)}$.
\end{lemma}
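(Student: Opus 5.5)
The plan is to reduce to the single time step $t_i$ at which epoch $i$ ends, and then use monotonicity. By construction in Algorithm~\ref{alg:truvarimp}, $U_t$ is built only from elements of $U_{t-1}$ (line~\ref{alg:uloop}) and $M_t$ only from elements of $M_{t-1}$ (line~\ref{alg:mloop}). Hence $U_{t'}\subseteq U_{t_i}$ and $M_{t'}\subseteq M_{t_i}$ for all $t'\ge t_i$, and it suffices to prove the claim at $t=t_i$. The case $i=0$ is trivial since $\overline{U}_{(0)}=\overline{M}_{(0)}=\candset$. If the while-loop on line~\ref{alg:epoch} advances several epochs at once, the argument below applies to each of them, because the condition was checked for each respective $\eta_{(i)}$.

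At $t=t_i$ the epoch condition yields two width bounds. For all $\lambda\in U_t$ we have $u_t(\lambda)-l_t(\lambda)\le 2(1+\deltabar)\eta_{(i)}$. For all $\lambda\in M_t$ we have $u_t(\lambda)-l_t(\lambda)\le 2w$, where $w:=\frac{1+\deltabar}{1+\varepsilon_{\mathrm{rel}}}\eta_{(i)}$.

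The main obstacle is that $h_t^{\mathrm{opt}}$ and $h_t^{\mathrm{pes}}$ are defined through minima over $M_{t-1}$, not $M_t$, so the width bound does not directly apply. I resolve this by showing that both minimizers survive into $M_t$. Let $a\in\argmin_{M_{t-1}}u_t$. Then $l_t(a)\le u_t(a)=\objfun_{\mathrm{min},t}^{\mathrm{pes}}$, so $a\in M_t$. Let $b\in\argmin_{M_{t-1}}l_t$. Then $l_t(b)\le l_t(a)\le \objfun_{\mathrm{min},t}^{\mathrm{pes}}$, so $b\in M_t$ as well. Therefore
\[
h_t^{\mathrm{pes}}-h_t^{\mathrm{opt}}=(1+\varepsilon_{\mathrm{rel}})\bigl(u_t(a)-l_t(b)\bigr)\le(1+\varepsilon_{\mathrm{rel}})\bigl(u_t(b)-l_t(b)\bigr)\le 2(1+\deltabar)\eta_{(i)}.
\]

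For $\lambda\in U_t$, the point was not classified, so $u_t(\lambda)>h_t^{\mathrm{opt}}$ and $l_t(\lambda)\le h_t^{\mathrm{pes}}$. Combining this with event $\mathcal{E}$, Lemma~\ref{lemma:boxing} ($h_t^{\mathrm{opt}}\le h\le h_t^{\mathrm{pes}}$), the width bound, and the gap bound gives the upper bound
\[
c(\lambda)\le u_t(\lambda)\le l_t(\lambda)+2(1+\deltabar)\eta_{(i)}\le h_t^{\mathrm{pes}}+2(1+\deltabar)\eta_{(i)}\le h_t^{\mathrm{opt}}+4(1+\deltabar)\eta_{(i)}\le h+4(1+\deltabar)\eta_{(i)}.
\]
Symmetrically, for the lower bound,
\[
c(\lambda)\ge l_t(\lambda)\ge u_t(\lambda)-2(1+\deltabar)\eta_{(i)}>h_t^{\mathrm{opt}}-2(1+\deltabar)\eta_{(i)}\ge h_t^{\mathrm{pes}}-4(1+\deltabar)\eta_{(i)}\ge h-4(1+\deltabar)\eta_{(i)}.
\]
Hence $U_t\subseteq\overline{U}_{(i)}$.

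For $\lambda\in M_t$, Lemma~\ref{lemma:boxing} gives $\lambda^*\in M_t$, so the width bound applies to $\lambda^*$. Using $l_t(\lambda)\le \objfun_{\mathrm{min},t}^{\mathrm{pes}}\le u_t(\lambda^*)\le l_t(\lambda^*)+2w\le \objfun_{\mathrm{min}}+2w$, we obtain
\[
c(\lambda)\le u_t(\lambda)\le l_t(\lambda)+2w\le \objfun_{\mathrm{min}}+4w.
\]
Therefore $M_t\subseteq\overline{M}_{(i)}$.
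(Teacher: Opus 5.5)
Your proof is correct and follows the same route as the paper: epoch-end width bounds on $U_t$ and $M_t$, the sandwich $h_t^{\mathrm{opt}}\le h\le h_t^{\mathrm{pes}}$ from Lemma~\ref{lemma:boxing}, and two interval widths giving the factor $4$. It is more careful than the paper in two places: you check that the minimizers of $u_t$ and $l_t$ over $M_{t-1}$ stay in $M_t$ (the paper applies the width bound, which is only enforced on $M_t$, to minima over $M_{t-1}$ without comment), and you extend from the epoch-ending step to all later $t$ via $U_t\subseteq U_{t-1}$, $M_t\subseteq M_{t-1}$. One small caveat, which the paper's proof also leaves implicit: your remark about several epochs ending at the same $t$ needs $\beta_{(i)}$ nondecreasing in $i$, because $l_t,u_t$ were computed with the $\beta$ of the epoch active at the start of that iteration.
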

\begin{proof}
Since \TruVarImp enforces that
\[
    \max_{\lambda \in M_t} \beta_{(i)}^{1/2}\sigma_t(\lambda)
    \le
    \frac{1 + \deltabar}{1+\varepsilon_{\mathrm{rel}}}\eta_{(i)}
\]
at the end of epoch $i$ (Algorithm~\ref{alg:truvarimp}, Line~\ref{alg:epoch}), on the event $\mathcal{E}$ the value of $c_{\mathrm{min},t}^{\mathrm{pes}}$ is
\begin{align*}
    c_{\mathrm{min},t}^{\mathrm{pes}}
    &= \min_{\lambda\in M_{t-1}} u_t(\lambda) \\
    &= \min_{\lambda\in M_{t-1}}
       \bigl\{\mu_t(\lambda)+\beta_{(i)}^{1/2}\sigma_t(\lambda)\bigr\} \\
    &\le
       \min_{\lambda\in M_{t-1}}
       \left\{
           l_t(\lambda)
           + 2\frac{1 + \deltabar}{1+\varepsilon_{\mathrm{rel}}}\eta_{(i)}
       \right\} \\
    &= \objfun_{\mathrm{min}}
       + 2\frac{1 + \deltabar}{1+\varepsilon_{\mathrm{rel}}}\eta_{(i)}\text{ (using Lemma~\ref{lemma:boxing}).}
\end{align*}

We also have
\[
    u_t(\lambda)
    = l_t(\lambda) + 2\,\beta_{(i)}^{1/2}\sigma_t(\lambda)
    \le
    l_t(\lambda) + 2\frac{1 + \deltabar}{1+\varepsilon_{\mathrm{rel}}}\eta_{(i)}.
\]
Since $M_t$ contains points for which $l_t(\lambda) \le c_{\mathrm{min},t}^{\mathrm{pes}}$, we have for all $\lambda\in M_t$,
\begin{align*}
    u_t(\lambda)
    &\le
    l_t(\lambda) + 2\frac{1 + \deltabar}{1+\varepsilon_{\mathrm{rel}}}\eta_{(i)} \\
    &\le
    c_{\mathrm{min},t}^{\mathrm{pes}}
    + 2\frac{1 + \deltabar}{1+\varepsilon_{\mathrm{rel}}}\eta_{(i)} \\
    &\le
    c_{\mathrm{min}}
    + 4\frac{1 + \deltabar}{1+\varepsilon_{\mathrm{rel}}}\eta_{(i)},
\end{align*}
which gives
\[
    M_t \subseteq \overline{M}_{(i)}.
\]

Furthermore, on event $\mathcal{E}$,
\begin{align*}
    \bigl|h_t^{\mathrm{opt}} - h\bigr|
    &=
    \left|
        \min_{\evalpointbar \in M_{t-1}} l_t(\evalpointbar)
        - \objfun_{\mathrm{min}}
    \right|
    (1 + \varepsilon_{\mathrm{rel}})
    \le
    2(1 + \deltabar)\eta_{(i)}, \\
    \bigl|h_t^{\mathrm{pes}} - h\bigr|
    &=
    \left|
        c_{\mathrm{min},t}^{\mathrm{pes}} - \objfun_{\mathrm{min}}
    \right|
    (1 + \varepsilon_{\mathrm{rel}})
    \le
    2(1+\deltabar)\eta_{(i)}.
\end{align*}
The points in $U_t$ are the ones for which $c(\lambda)$ is at most
$2(1 + \deltabar)\eta_{(i)}$ away from either $h_t^{\mathrm{opt}}$ or
$h_t^{\mathrm{pes}}$, so
\[
    U_t \subseteq \overline{U}_{(i)}.
\]\qed
\end{proof}

Just like \cite{bogunovic-nips16a}, we now define for a collection of points $S$, possibly with duplicates, the total cost as $\cc(S) := \sum_{\lambda \in S} \cc(\lambda)$, and the posterior variance at $\lambda$ after observing points $\lambda_1, \ldots, \lambda_t$ as well as new points in $S$ as $\sigma_{t|S}^2(\lambda)$.
We also define the minimum cost that would be necessary to jointly reduce the posterior standard deviation on both $U$ and $M$ to respective upper bounds as
\begin{equation}
    C^{*}(\actualgreekxi_U, \actualgreekxi_M; U, M) := \min_{S} \left\{ \cc(S) : \max_{\lambda \in U} \sigma_{0|S}(\lambda) \le \actualgreekxi_U \text{ and } \max_{\lambda \in M} \sigma_{0|S}(\lambda) \le \actualgreekxi_M \right\}\mathrm{.}
\end{equation}
We also make use of the minimum and maximum cost of the evaluation of a single point,
\begin{equation}\label{eq:min-max-cost}
    \cc_{\mathrm{min}} := \min_{\lambda \in \candset} \cc(\lambda)\mathrm{,}
    \quad
    \cc_{\mathrm{max}} := \max_{\lambda \in \candset} \cc(\lambda)\mathrm{.}
\end{equation}
We mostly follow \cite{bogunovic-nips16a} in defining the $\epsilon$-accuracy for the LSE problem\footnote{\cite{bogunovic-nips16a} differs in that they have $c(\lambda)<h$ for $\lambda\in L_t$}:
\begin{definition}
    \label{def:epsilon-accuracy}
    The triplet $(L_t, H_t, U_t)$ is $\epsilon$-accurate if all $\lambda$ in $L_t$ satisfy $c(\lambda) \le h$, all $\lambda$ in $H_t$ satisfy $c(\lambda) > h$, and all $\lambda$ in $U_t$ satisfy $|h - c(\lambda)| \le \frac{\epsilon}{2}$.
\end{definition}

We now have all the ingredients to state our main theorem.

\subsection{Convergence Theorem}\label{appendix_convergence_theorem}

\begin{theorem}
    \label{thm:truvarimp-analysis}
    Assuming a kernel function $k(\lambda, \lambda) \le 1$ for all $\lambda \in \candset$, and for which the variance reduction function $\psi_{t,\lambda}(S) := \sigma_t^2(\lambda) - \sigma_{t|S}^2(\lambda)$ is submodular for any selected points $(\lambda_1, \ldots, \lambda_t)$ and any query point $\lambda \in \candset$.
    For a given $\epsilon > 0$ and $\delta \in (0,1)$, and given values $\left\{C_{(i)}\right\}_{i\ge 1}$ and $\left\{\beta_{(i)}\right\}_{i\ge 1}$ that satisfy
    \begin{eqnarray}
        C_{(i)} &\ge& C^{*}\left(\frac{\eta_{(i)}}{\beta_{(i)}^{1/2}}, \frac{\eta_{(i)}}{\beta_{(i)}^{1/2}(1+\varepsilon_{\mathrm{rel}})}; \overline{U}_{(i-1)}, \overline{M}_{(i-1)}\right) \times\\
        &&\qquad\log\left(\beta_{(i)}\frac{|\overline{U}_{(i-1)}| + (1 + \varepsilon_{\mathrm{rel}})^2|\overline{M}_{(i-1)}|}{\deltabar^2 \eta_{(i)}^2}\right) + \cc_{\mathrm{max}}\mathrm{,}
    \end{eqnarray}
    and
    \begin{equation}\label{eq:beta-i-bound}
        \beta_{(i)} \ge 2 \log \frac{|\candset|(\sum_{i'\le i} C_{(i')})^2 \pi^2 }{6 \delta \cc_{\mathrm{min}}^2}\mathrm{,}
    \end{equation}
    then if \TruVarImp is run until cumulative cost reaches
    \begin{equation}
        C_{\epsilon} = \sum_{i: 8(1+\deltabar)\eta_{(i - 1)} > \epsilon} C_{(i)}\mathrm{,}
    \end{equation}
    with probability at least $1-\delta$, $\epsilon$-accuracy is achieved.
\end{theorem}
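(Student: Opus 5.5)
The plan is to follow the structure of the \TruVar analysis of Bogunovic et al.~\cite{bogunovic-nips16a}, with three parts: (i) the event $\mathcal{E}$ holds with probability at least $1-\delta$; (ii) on $\mathcal{E}$, each epoch $i$ ends after cost at most $C_{(i)}$; (iii) on $\mathcal{E}$, once the first epoch $i$ with $8(1+\deltabar)\eta_{(i)}\le\epsilon$ has ended, the triplet $(L_t,H_t,U_t)$ is $\epsilon$-accurate.

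\textbf{Step (iii).} Take any $t$ after that epoch. Lemma~\ref{lemma:boxing} gives the correctness part of Definition~\ref{def:epsilon-accuracy}.
\begin{itemize}
\item A point enters $L_t$ only if $c(\lambda)\le u_t(\lambda)\le h_t^{\mathrm{opt}}\le h$.
\item A point enters $H_t$ only if $c(\lambda)\ge l_t(\lambda)>h_t^{\mathrm{pes}}\ge h$.
\end{itemize}
By Lemma~\ref{lemma:hard-sets}, $U_t\subseteq\overline{U}_{(i)}$, so every $\lambda\in U_t$ has $|c(\lambda)-h|\le 4(1+\deltabar)\eta_{(i)}\le\epsilon/2$. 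The sum defining $C_\epsilon$ runs exactly over the epochs up to and including this one, so it suffices to show that cumulative cost $C_\epsilon$ finishes them.

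\textbf{Step (ii), the main obstacle.} Fix epoch $i$ starting at time $t_0$. By Lemma~\ref{lemma:hard-sets} applied to epoch $i-1$, we have $U_t\subseteq\overline{U}_{(i-1)}$ and $M_t\subseteq\overline{M}_{(i-1)}$ throughout the epoch. Define the potential
\[
\Phi_t:=\Delta_i(\overline{U}_{(i-1)}\cap U_t,\sigma_t^2,1)+\Delta_i(\overline{M}_{(i-1)}\cap M_t,\sigma_t^2,1+\varepsilon_{\mathrm{rel}}).
\]
Write the truncated terms as $\max\{\beta_{(i)}\sigma^2-\eta_{(i)}^2,0\}$ for $U$ and $\max\{(1+\varepsilon_{\mathrm{rel}})^2\beta_{(i)}\sigma^2-\eta_{(i)}^2,0\}$ for $M$. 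Each is a truncated variance reduction, hence submodular by the assumption on $\psi_{t,\lambda}$, and the sum is submodular too.

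Now let $S^*$ be a set achieving $C^*$ in the theorem. Since further observations only decrease variance, adding $S^*$ to the current data drives every truncated term to zero. The greedy cost-weighted choice in Line~\ref{alg:acquisition} is precisely the greedy step for this submodular coverage objective. The standard lemma from \cite{bogunovic-nips16a} (greedy cost-weighted submodular covering) then gives
\[
\Phi_{t}\le \Phi_{t-1}\Bigl(1-\tfrac{\cc(\lambda_t)}{C^*}\Bigr).
\]
Hence after cost $C^*\log(\Phi_{t_0}/\deltabar^2\eta_{(i)}^2)$ we get $\Phi<\deltabar^2\eta_{(i)}^2$. Initially $\Phi_{t_0}\le\beta_{(i)}(|\overline U_{(i-1)}|+(1+\varepsilon_{\mathrm{rel}})^2|\overline M_{(i-1)}|)$, using $k\le1$.

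It remains to check that $\Phi<\deltabar^2\eta^2$ triggers the epoch condition of Line~\ref{alg:epoch}. For the $U$ part, each individual term is below $\deltabar^2\eta^2$, so
\[
\beta\sigma^2<\eta^2(1+\deltabar^2)\le(1+\deltabar)^2\eta^2.
\]
For the $M$ part, the same bound holds for $(1+\varepsilon_{\mathrm{rel}})^2\beta\sigma^2$, which gives the scaled criterion. The extra $\cc_{\max}$ in $C_{(i)}$ accounts for the last step overshooting.

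The delicate points are two. First, $U_t$ and $M_t$ shrink during the epoch, which only helps monotonicity. Second, the acquisition sums over the current $U_{t-1}$ and $M_{t-1}$, not over the hard sets. As in TruVaR, I would argue the greedy bound using the current sets directly, with the initial value bounded through containment in the hard sets.

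\textbf{Step (i).} Total cost through epoch $i$ is at most $\sum_{i'\le i}C_{(i')}$, so the number of evaluations is $t\le\sum_{i'\le i} C_{(i')}/\cc_{\min}$. Applying Lemma~\ref{lemma:srinivas2012information} with $D=\candset$, the choice \eqref{eq:beta-i-bound} gives $\beta_{(i)}\ge 2\log(|\candset|t^2\pi^2/6\delta)$ at every time in epoch $i$. Therefore $\mathcal{E}$ holds with probability at least $1-\delta$.

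Combining (i)–(iii) yields the theorem.
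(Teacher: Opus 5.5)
Your proposal is correct and follows essentially the same route as the paper. The shared steps are: the greedy submodular-covering bound on the truncated-variance potential over the current $U_t$ and $M_t$, initialized via $k\le 1$ and the hard sets $\overline{U}_{(i-1)}$, $\overline{M}_{(i-1)}$ and terminated at level $\deltabar^2\eta_{(i)}^2$ plus one $\operatorname{cost}_{\max}$ overshoot; then the Srinivas et al.\ union bound using $t\le\sum_{i'\le i}C_{(i')}/\operatorname{cost}_{\min}$; and finally the boxing and hard-set lemmas for $\epsilon$-accuracy. The differences are cosmetic: you argue forward (after cost $C^*\log(\cdot)$ the potential falls below $\deltabar^2\eta_{(i)}^2$, which forces the epoch condition) while the paper argues the contrapositive (at the last step before the epoch ends the potential still exceeds $\deltabar^2\eta_{(i)}^2$), and the intersection with the hard sets in your $\Phi_t$ is redundant because $U_t\subseteq\overline{U}_{(i-1)}$ and $M_t\subseteq\overline{M}_{(i-1)}$ already hold.
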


\begin{proof}
    We make use of Lemma~\ref{lemma:srinivas2012information} and show that, if the event $\mathcal{E}$ occurs, $C_{(i)}$ bounds the cost spent in epoch $i$ from above.
We define the \emph{scaled excess variance} $\Delta_i(\cdot)$ as
\begin{align}
    \Delta_i(D, \sigma^2, p) := \sum\nolimits_{\evalpointbar \in D}\max\left\{ p^2 \beta_{(i)}\,\sigma^2(\evalpointbar) - \eta_{(i)}^2, 0 \right\}\mathrm{,}
\end{align}
where $D$ is a set of $\lambda$ under evaluation, and create the function
\begin{equation}\label{eq:g-t-definition}
    \begin{aligned}
        g_t(S) & := \Delta_i(U_{t-1}, \sigma_{t-1}^2, 1) - \Delta_i(U_{t-1}, \sigma_{t-1|S}^2, 1)\,+ \\
        & \Delta_i(M_{t-1}, \sigma_{t-1}^2, 1 + \varepsilon_{\mathrm{rel}}) - \Delta_i(M_{t-1}, \sigma_{t-1|S}^2, 1 + \varepsilon_{\mathrm{rel}})\mathrm{,}
    \end{aligned}
\end{equation}
where $S$ is a set of $\lambda$ under evaluation.

The maximum attainable value of $g_t(S)$ over all sets $S$ is
\begin{align}\label{eq:g-t-max-definition}
    g_{t,\mathrm{max}} := \max_{S} g_t(S) = \Delta_i(U_{t-1}, \sigma_{t-1}^2, 1) + \Delta_i(M_{t-1}, \sigma_{t-1}^2, 1 + \varepsilon_{\mathrm{rel}})\mathrm{,}
\end{align}
which arises from the fact that, with arbitrarily many evaluations on a finite set, the posterior variance can be made arbitrarily small.

We construct the submodular covering problem
\begin{equation}\label{eq:submodular-covering-problem}
    \text{minimize}_{S} \cc(S) \quad \text{subject to} \quad g_t(S) = g_{t,\mathrm{max}}\mathrm{.}
\end{equation}
Here, we make use of the fact that our $g_t(S)$ is a linear combination with positive coefficients\footnote{which preserves submodularity~\cite{krause2014submodular}} of Equation~(22) in Bogunovic et al.~\cite{bogunovic-nips16a}, which they show to be submodular, non-decreasing, and $0$ for the empty set.
In Line~\ref{alg:acquisition} \TruVarImp chooses $\lambda_t$ according to the greedy algorithm for the Problem in Equation~\eqref{eq:submodular-covering-problem}, and, again like \cite{bogunovic-nips16a}, we can make use of Lemma~2 of \cite{krause2005note} to state that
\begin{equation}\label{eq:greedy-bound}
    g_t(\{\lambda_t\}) \ge \frac{\cc(\lambda_t)}{\cc(S_t^*)} g_{t,\mathrm{max}}\mathrm{,}
\end{equation}
where $S_t^*$ is the optimal solution to \eqref{eq:submodular-covering-problem}.\footnote{Lemma~2 of \cite{krause2005note} is stated in terms of budgeted submodular maximization. The problem ``$\text{maximize}_{S} g_t(S)$ subject to $\quad\cc(S) \le \cc(S_t^*)$'' has the optimum $g_t(\mathrm{OPT}) = g_{t,\mathrm{max}}$; plugging this into the lemma gives the result in \eqref{eq:greedy-bound}.}
Following Equations~(26)--(29) of \cite{bogunovic-nips16a}, making the same argument with respect to the decreasing nature of $U_t$ and $M_t$, we arrive at
\begin{equation}
    \frac{g_{t+\ell,\mathrm{max}}}{g_{t,\mathrm{max}}} \le \exp\left(-\frac{\sum_{t'=t+1}^{t+\ell} \cc(\lambda_{t'})}{\cc(S_t^*)}\right)\mathrm{,}
\end{equation}
which implies
\begin{equation}
    \cc(S_t^*) \log\left(\frac{g_{t,\mathrm{max}}}{g_{t+\ell,\mathrm{max}}}\right) \ge \sum_{t'=t+1}^{t+\ell} \cc(\lambda_{t'})\mathrm{.}
\end{equation}

We use this inequality to bound the total cost incurred in a given epoch $i$, by noting that, for $t$ at the beginning of an epoch, with $k(\lambda, \lambda) = 1$ and therefore $\sigma_{t-1}^2(\lambda) \le 1$, we have
\begin{equation}
    g_{t,\mathrm{max}} \le \beta_{(i)}\left(|U_{t-1}| + (1 + \varepsilon_{\mathrm{rel}})^2|M_{t-1}|\right) \le \beta_{(i)}(|\overline{U}_{(i-1)}| + (1 + \varepsilon_{\mathrm{rel}})^2|\overline{M}_{(i-1)}|)\mathrm{,}
\end{equation}
where the last inequality assumes the event $\mathcal{E}$ to hold.

If we let $t+\ell$ be the last time step where the same epoch did not yet end, $g_{t+\ell,\mathrm{max}}$ is bounded from below by the fact that there must be one $\lambda$ that is either in $U_{t+\ell}$, for which $\beta_{(i)}\sigma_{t+\ell}^2(\lambda) > (1+\deltabar)^2\eta_{(i)}^2 \ge (1 + \deltabar^2)\eta_{(i)}^2$, or in $M_{t+\ell}$, for which $\beta_{(i)}(1+\varepsilon_{\mathrm{rel}})^2\sigma_{t+\ell}^2(\lambda) > (1+\deltabar)^2\eta_{(i)}^2 \ge (1 + \deltabar^2)\eta_{(i)}^2$.
Plugging either of these into~\eqref{eq:g-t-max-definition} gives
\begin{equation}
    g_{t+\ell,\mathrm{max}} > \deltabar^2\eta_{(i)}^2\mathrm{.}
\end{equation}
To exit epoch $i$, one more evaluation is necessary that will be at most $\cc_{\mathrm{max}}$ in cost.
Denoting $t_{(i)}$ as the time step at the beginning of epoch $i$, we have therefore an upper bound on the cost $C_{(i)}$ spent in epoch $i$ as
\begin{eqnarray*}
    \sum_{t'=t_{(i)}+1}^{t_{(i+1)}} \cc(\lambda_{t'}) & \le & \sum_{t'=t_{(i)}+1}^{t+\ell} \cc(\lambda_{t'}) + \cc_{\mathrm{max}} \\
    &\le& \cc(S_t^*) \log\left(\frac{g_{t,\mathrm{max}}}{g_{t+\ell,\mathrm{max}}}\right) + \cc_{\mathrm{max}} \\
    & \le & \cc(S_t^*) \log\left(\beta_{(i)}\frac{|\overline{U}_{(i-1)}| + (1 + \varepsilon_{\mathrm{rel}})^2|\overline{M}_{(i-1)}|}{\deltabar^2\eta_{(i)}^2}\right) + \cc_{\mathrm{max}} \\
    & \le & C^{*}\left(\frac{\eta_{(i)}}{\beta_{(i)}^{1/2}}, \frac{\eta_{(i)}}{\beta_{(i)}^{1/2}(1+\varepsilon_{\mathrm{rel}})}; \overline{U}_{(i-1)}, \overline{M}_{(i-1)}\right) \times\\
    & & \qquad \log\left(\beta_{(i)}\frac{|\overline{U}_{(i-1)}| + (1 + \varepsilon_{\mathrm{rel}})^2|\overline{M}_{(i-1)}|}{\deltabar^2\eta_{(i)}^2}\right) + \cc_{\mathrm{max}}\\
    & = & C_{(i)}\text{.}
\end{eqnarray*}

Finally, let $i_t$ denote the epoch active at iteration $t$. Since every evaluation costs at least
$\cc_{\min}$, after $t$ evaluations we have
\[
t \le \frac{1}{\cc_{\min}} \sum_{s=1}^t \cc(\lambda_s)
   \le \frac{1}{\cc_{\min}} \sum_{i'=1}^{i_t} C_{(i')}.
\]
Hence \eqref{eq:beta-i-bound} implies
\[
\beta_{(i_t)} \ge 2 \log \frac{|\candset| t^2 \pi^2}{6\delta}.
\]
Applying Lemma~\ref{lemma:srinivas2012information} with $D=\candset$ yields
$\Prob(\mathcal{E}) \ge 1-\delta$.

Now let
\[
i^{*} := \min\{ i \ge 1 : 8(1+\deltabar)\eta_{(i)} \le \epsilon\}.
\]
After cumulative cost $\sum_{i'=1}^{i^{*}} C_{(i')}$, epoch $i^{*}$ has completed. On the event
$\mathcal{E}$, a true minimizer $\lambda^{*} \in \arg\min_{\lambda \in \candset} c(\lambda)$
remains in $M_t$ for all $t$, and therefore
\[
h \in [h_t^\mathrm{opt}, h_t^\mathrm{pes}]
\qquad\text{for all } t.
\]
Using Lemma~\ref{lemma:hard-sets}, after epoch $i^{*}$ we have
$U_t \subseteq \overline{U}_{(i^{*})}$, so every $\lambda \in U_t$ satisfies $|c(\lambda)-h| \le 4(1+\deltabar)\eta_{(i^{*})} \le \epsilon/2$.
Moreover, for all $\lambda \in L_t$, $c(\lambda) \le u_t(\lambda) \le h_t^\mathrm{opt} \le h$ and for all $\lambda \in H_t$, $c(\lambda) \ge l_t(\lambda) > h_t^\mathrm{pes} \ge h$.
Thus $(L_t,H_t,U_t)$ is $\epsilon$-accurate.\qed
\end{proof}

\clearpage

\subsection{Application Notes}

The theorem shows convergence in an idealized setting where $c(\cdot)$ is drawn from the GP prior and observed with i.i.d.\ Gaussian noise.
Most importantly, it makes use of a fixed kernel, whereas GP kernels are typically re-adjusted in applications such as HPO.
The theoretical treatment therefore aids with understanding the algorithm, and how it behaves:
Compared to an algorithm that \emph{optimally} chooses points $\lambda_t$ to evaluate for variance reduction, \TruVarImp{} has only logarithmic slowdown.
In practice, the hyperparameters $\deltabar$ and $\beta_{(i)}$ can be chosen with values that differ from the theory; $\deltabar=0$ and constant $\beta_{(i)}$ work well in practice.
These are also the values we chose in our experiments, see Appendix~\ref{appendix_truvarimp_config}. 

\section{Extended Experiments and Results} \label{appendix_exp_results}

This section contains additional experimental results for Section \ref{sec_experiments}, a formal definition of Rashomon capacity together with our adaptation on regression settings, a formal definition of the FI method permutation feature importance (PFI), a guideline for its interpretation, and a formal definition of variable importance clouds (VICs).

Table \ref{tab_exp_model_CS} displays the number of models per model class for each of the considered tasks. 
Not all model classes are represented in the \CashomonSet{} for every task; for some tasks, only one model class spans the final set.
The star behind a number indicates that the reference model of the corresponding \CashomonSet{} belongs to this model class, underscoring that different model classes should be considered for different tasks. 
For all considered tasks, the reference model belongs to the most common model class in the \CashomonSet{}.
\begin{table}[htb]
\centering
\caption{Number of models of each model class within a \CashomonSet{} found by \TruVarImp per task. ``*'' marks the reference model class.}
\label{tab_exp_model_CS}
\begin{tabular}{llllllll}
  \hline
 & \texttt{glmnet} & \texttt{gosdt} & \texttt{nnet} & \texttt{svm.linear} & \texttt{svm.radial} & \texttt{cart} & \texttt{xgb} \\
  \hline
  \texttt{BC} & 234* & 0 & 130 & 0 & 0 & 0 & 0 \\
  \texttt{BS} & 0 & 0 & 0 & 0 & 0 & 0 & 346* \\
  \texttt{CR} & 0 & 0 & 20* & 0 & 0 & 0 & 0 \\
  \texttt{CS} & 40 & 0 & 9 & 0 & 0 & 2 & 297* \\
  \texttt{CS} (binarized) & 22 & 19 & 66 & 5 & 15 & 33 & 178* \\
  \texttt{FC} & 33 & 0 & 2 & 5 & 1 & 10 & 288* \\
  \texttt{GC} & 61* & 0 & 7 & 106 & 29 & 0 & 154 \\
  \texttt{MK} & 0 & 0 & 10* & 0 & 0 & 0 & 0 \\
  \texttt{ST} & 0 & 0 & 328* & 0 & 17 & 0 & 0 \\
   \hline
\end{tabular}
\end{table}

\clearpage

\subsection{Further Details on Level Set Estimation}\label{appendix_truvarimp_config}
In each GP-based method, we use a Matérn-5/2 kernel with automated relevance detection \cite{rasmussen-book06a}.
We re-estimate lengthscales by maximum likelihood at each step, as is commonly done for HPO.
To stabilize GP HPO, each run starts with 30 random evaluations per model class.

Like Bogunovic et al.~\cite{bogunovic-nips16a}, we set $\beta_t^{1/2}=3$, $\eta_{(1)}=1$, $r=0.1$ and $\deltabar=0$.
We also implement their optimization of evaluating the acquisition function only on the points in $M_{t-1}$ and $U_{t-1}$.
Because we refit the GP in every iteration, $\sigma_t(\lambda)$ is actually non-monotonic with respect to $t$.
We therefore use non-monotonic $M_{t}$ and $U_{t}$, which means that points that have left these sets can be re-added in later iterations.
This is implemented by iterating over all of $\candset$ instead of $U_{t-1}$ or $M_{t-1}$ in Lines \ref{alg:uloop} and \ref{alg:mloop} in Algorithm~\ref{alg:truvarimp}, respectively.
Other algorithms (\texttt{LSE}, $\texttt{LSE}_{\texttt{IMP}}$ and \TruVar) are likewise run in a non-monotonic version.

For the implementation of \texttt{LSE} and $\texttt{LSE}_{\texttt{IMP}}$~\cite{gotovos2013}, we likewise choose $\beta_{t}^{1/2}=3$ as used by the authors.

\texttt{OPTIMIZE} performs Bayesian optimization using the expected improvement acquisition function.

\texttt{LSE}, \texttt{STRADDLE}, and \TruVar are not implicit set-estimation algorithms; i.e., they ordinarily require an explicit cutoff value to classify points.
We run them in a slightly modified form as baseline algorithms, setting the cutoff relative to the best score observed in their optimization run so far.
For this, it is again beneficial that we run these algorithms on non-monotonic candidate point sets. 

\begin{figure}[htb]
  \centering
  \includegraphics[width=\linewidth]{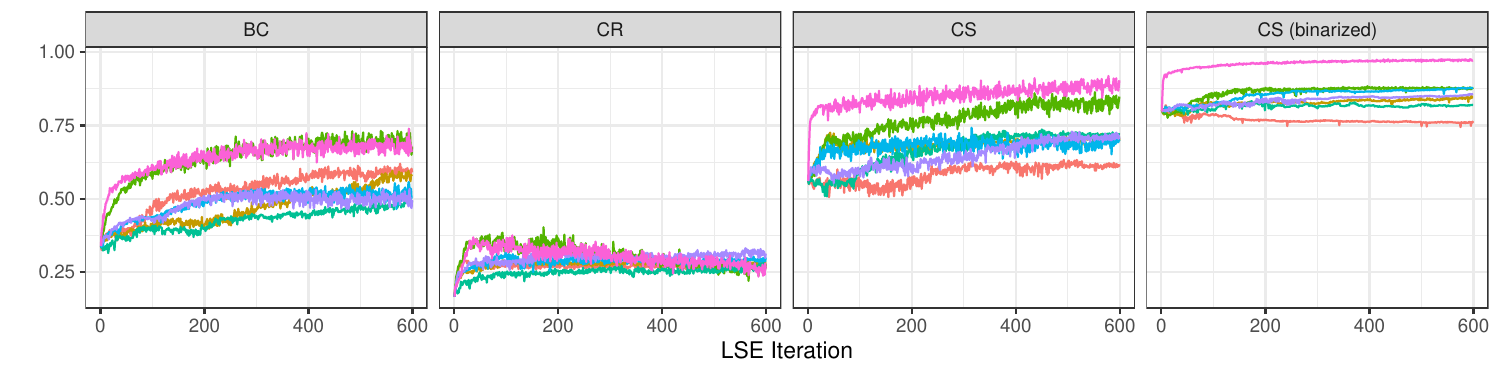}
  \includegraphics[width=\linewidth]{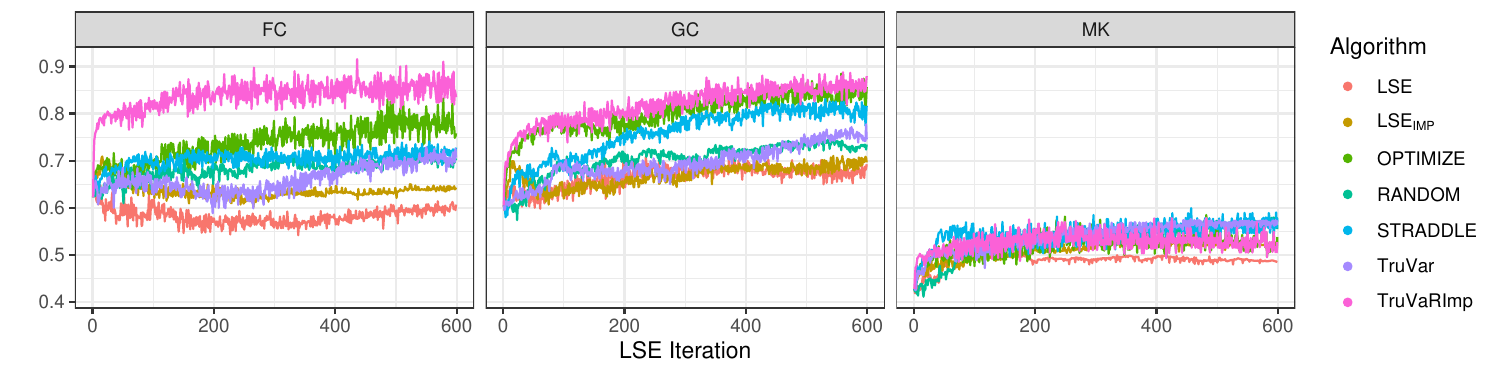}
  \caption{Mean F1 score of the surrogate model predicting \CashomonSet{} membership.}
  \label{fig:cash_results_app}
\end{figure}

\clearpage

\subsection{Rashomon Capacity} \label{appendix_RC}

\emph{Rashomon capacity} (RC) was introduced by Hsu et al. \cite{hsu-neurips22a} as a per-sample metric for predictive multiplicity in classification.
In our notation, for a fixed input $\xv$ and a \RashomonSet{} $\Rset$, the original definition can be written as
\begin{align*}
    m_C(\xv)
    := 2^{C(\xv)}, \text{ with }
    C(\xv)
    := \sup_{P_{\Rset}}
    \inf_{\mathbf q \in \Delta^{g-1}}
    \E_{f \sim P_{\Rset}}
    \left[
        d_{KL}\!\left(f(\xv)\,\|\,\mathbf q\right)
    \right],
\end{align*}
where $\Delta^{g-1}$ denotes the probability simplex over the $g$ classes, and the supremum is taken over all probability measures $P_{\Rset}$ on $\Rset$.
Thus, the original RC measures, for a single observation $\xv$, how much the class-probability predictions of models in $\Rset$ can spread. \cite{hsu-neurips22a}

We adapt RC in two ways: we extend it to regression by replacing KL divergence with the L2 loss, and we formulate a dataset-level variant that simultaneously optimizes a single global distribution $P_{\Rset}$ over models across all observations.
This global formulation models the scenario where a practitioner selects one model from the \RashomonSet{} for deployment.
We model this selection process as a distribution $P_{\Rset}$ over models, which would encode further (unknown) preferences and constraints
by the practitioner.
Since this distribution is unknown, we take a worst-case (or adversarial) perspective and take the supremum over all such distributions to measure the worst-case prediction spread:
\begin{align*}
    C^{\text{classif}} &= \sup_{P_{\Rset}} \E_{(\xv,y)\sim\PXY}  \left[\inf_{\textbf{q}\in \R^g} \E_{f\sim P_{\Rset}}\left[ d_{KL} \left(f(\xv) \,\|\, \textbf{q}\right) \right] \right], \text{ and} \\
    C^{\text{regr}} &= \sup_{P_{\Rset}} \E_{(\xv,y)\sim\PXY}  \left[\inf_{q\in \R} \E_{f\sim P_{\Rset}}\left[ \left(f(\xv) - q\right)^2 \right] \right].
\end{align*}
In practice we usually consider finite approximations of \RashomonSets{}, 
hence our distribution $P_{\Rset}$ becomes categorical and is represented by a vector of weights $w_m$ 
(one weight for each model $f_m$ in the \RashomonSet{}), 
with $0 \leq w_m \leq 1$ and $\sum_{m = 1}^M w_m = 1$.
A key result from information geometry states that for a mixture of discrete distributions, the unique minimizer of this right-sided divergence is the weighted arithmetic mean of the components \cite{nielsen2009}.
By substituting \textbf{q} with the weighted arithmetic mean in the objective, our optimization problem simplifies to:
\begin{align}
    C^{\text{classif}} &= \max_{\mathbf{w}} \frac{1}{n} \sum_{i=1}^n \left[ H\left( \sum_{m=1}^{M} w_m f_m(\xv^{(i)}) \right) - \sum_{m=1}^{M} w_m H\left(f_m(\xv^{(i)})\right) \right], \\
    C^{\text{regr}} &= \max_{\mathbf{w}} \frac{1}{n} \sum_{i=1}^n \left[  \sum_{m=1}^{M} w_m \left(f_m(\xv^{(i)})\right)^2 - \left(\sum_{m=1}^{M} w_m f_m(\xv^{(i)})\right)^2 \right], \\
    & \text{subject to } w_m \in [0, 1], \sum_{m=1}^M w_m = 1, \notag
\end{align}
where $H$ is the Shannon entropy.
Both objectives are concave in $\mathbf{w}$: for regression, the objective is linear minus convex-quadratic (a concave quadratic program); for classification, it is the concave entropy of a mixture minus a linear term (a concave program).
Concavity over the convex constraint set (probability simplex) guarantees that any local maximum is also a global one.
We use the \texttt{CVXR} package \cite{Fu-cvxr} in \texttt{R} to solve the optimization problem via disciplined convex programming.
Algorithm~\ref{alg:rashomon_capacity_agnostic} shows how to construct the symbolic graph for \texttt{CVXR}.

\begin{algorithm}[htb]
\algrenewcommand\algorithmicrequire{\textbf{Input:}}
\algrenewcommand\algorithmicensure {\textbf{Output:}}
	\caption{Rashomon Capacity via Symbolic Graph Construction}
	\label{alg:rashomon_capacity_agnostic}
	\begin{algorithmic}[1]
		\Require Models $\{f_m\}_{m=1}^M$ (regression or $g$-class classification), observations $\{\xi\}_{i=1}^n$
        \Ensure Rashomon capacity metric
		\State Let $\mathbf{w} \triangleq (w_1, \dots, w_M)$ be the symbolic vector of model weights

		\If {models $f_m$ are regression models}
			\State Predict $\yh_{im} \leftarrow f_m(\xi)$ for all $i, m$
			\For {$i = 1, \dots, n$}
                \State Define $\overline{y}_i(\mathbf{w}) \triangleq \sum_{m=1}^{M} w_m \yh_{im}$ \Comment{Expected prediction}
				\State Define $V_i(\mathbf{w}) \triangleq \sum_{m=1}^{M} w_m (\yh_{im})^2 - (\overline{y}_i(\mathbf{w}))^2$ \Comment{Variance expression}
			\EndFor
			\State Define $\text{Obj}(\mathbf{w}) \triangleq \sum_{i=1}^{n} V_i(\mathbf{w})$
		\Else \Comment{Classification Objective}
			\State Predict $\hat{p}_{imk} \leftarrow [f_m(\xi)]_{k}$\Comment{Predicted $\Prob(\text{class } k \mid \text{model } m, \text{instance } i)$}
			\For {each $i \in \{1, \dots, n\}$ and $m \in \{1, \dots, M\}$}
				\State $H_{im} \leftarrow -\sum_{k=1}^{g} \hat{p}_{imk} \log(\hat{p}_{imk})$ \Comment{Precompute constant base entropies}
			\EndFor
			\State Define $\text{Obj}(\mathbf{w}) \triangleq 0$
			\For {$i = 1, \dots, n$}
				\For {$k = 1, \dots, g$}
					\State Define $\overline{p}_{ik}(\mathbf{w}) \triangleq \sum_{m=1}^{M} w_m \hat{p}_{imk}$\Comment{Weighted probability predictions}
				\EndFor
				\State Define $H_i^Q(\mathbf{w}) \triangleq -\sum_{k=1}^{g} \overline{p}_{ik}(\mathbf{w}) \log(\overline{p}_{ik}(\mathbf{w}))$
				\State Define $\overline{H}_i(\mathbf{w}) \triangleq \sum_{m=1}^{M} w_m H_{im}$ \Comment{Expected entropy}
				\State Define $JSD_i(\mathbf{w}) \triangleq H_i^Q(\mathbf{w}) - \overline{H}_i(\mathbf{w})$\Comment{Jensen-Shannon Divergence}
				\State Update $\text{Obj}(\mathbf{w}) \leftarrow \text{Obj}(\mathbf{w}) + JSD_i(\mathbf{w})$
			\EndFor
		\EndIf

		\State \Comment{Pass the constructed symbolic graph to a convex solver}
		\State $\mathbf{w}^* \leftarrow \argmax_{\mathbf{w} \in \Delta^{M-1}} \left\{ \text{Obj}(\mathbf{w}) \right\} \quad \text{s.t.} \quad w_m \ge 0, \sum_{m=1}^{M} w_m = 1$
		\State \Return Result $\leftarrow \text{Obj}(\mathbf{w}^*) / n$
	\end{algorithmic}
\end{algorithm}

\clearpage

\subsection{Variable Importance Clouds} \label{appendix_vic_results}

A \emph{variable importance cloud} (VIC) for a model set $\Rset$ is defined as
\begin{align*}
    \text{VIC}(\Rset) = \{ \text{MR}(f) \mid f \in \Rset \},
\end{align*}
where $\text{MR}(f) \in \mathbb{R}^p$ is a vector of \emph{model reliance} values, one per feature.
Each element $\text{MR}_j(f)$ quantifies the degree to which model $f$ relies on feature $X_j$ (i.e., a feature importance).
Visualized as a scatter plot with one point per feature per model, the VIC reveals the full distribution of feature importances across all models in the set.
In our paper, we use PFI as the model reliance measure, defined below.

\paragraph{Permutation Feature Importance}
PFI \cite{breiman_statistical_2001,fisher-jmlr19a} is a feature importance method that assigns to each feature a single score, which can then be compared across features to assess their relative importance.
For a given model $f$, PFI for a feature of interest $X_j$ is defined as
\begin{align*}
    \text{PFI}_j(f) =\, &\E_{(X,Y)\sim\PXY,\Tilde{X}_j\sim P_{X_j}}\left[L\left(Y, f(\Tilde{X}_{j}, X_{-j})\right)\right] -\E_{(X,Y)\sim\PXY}\left[L\left(Y, f(X)\right)\right],
\end{align*}
where $X_{-j}$ denotes the feature vector $X$ without the feature of interest, $X_j,$ and the permuted feature $\Tilde{X}_{j}$ is distributed according to the marginal distribution $P_{X_j}$.
This quantifies how the model relies on $X_j$, and, if $X_j$ is conditionally independent of all other features given $Y$, how $X_j$ is associated with the target $Y$ \cite{Ewald-xAI24a}.
In practice, we generally do not have independent features at hand; still, PFI is useful for analyzing how a model relies on the feature.

\paragraph{Further results.}{

In Section \ref{sec:fi_for_RE}, we concentrate on comparing our \TruVarImp \CashomonSet{} with \RashomonSets{} found by \TreeFARMS.
Since our visualizations of feature importance aim only to show application examples of Rashomon and \CashomonSets{}, and due to space constraints, we focus on the most interesting tasks in the main paper.
Figures \ref{fig_vics_all1a} and \ref{fig_vics_all2} show the VICs for the remaining four binary datasets on which we base our comparison.

Notably, task \texttt{MK} differs from the others in the sense that 
\begin{inparaenum}[\bf (1)]
    \item the FI values in the \CashomonSet{} agree across models, and 
    \item the \TreeFARMS \RashomonSet{} contains models assigning negative PFI values to some features. 
\end{inparaenum}
For this dataset, the \emph{no-skill} Brier baseline is $0.2353$, obtained from the class prevalence baseline $\P(Y = 1) \times (1-\P(Y = 1))$. 
Hence, the \TreeFARMS \RashomonSet{} contains models that perform worse than an uninformative constant predictor (Brier score $> 0.2353$, see Figure~\ref{fig_vics_all1b}), explaining why some PFI values are below $0$.
At the same time, \texttt{MK} seems to be a task that can be perfectly solved by a model from model class \texttt{nnet}, which is part of the CASH space but not considered in \TreeFARMS.
As such, only one final model (maybe found several times through different HPCs) is in our \CashomonSet{} with Brier score $=0$ (and, hence, $\varepsilon_{\mathrm{rel}} = 0$).

\begin{figure}[htb]
    \centering
    
    \begin{subfigure}[b]{\textwidth}
        \centering
        \includegraphics[width=\textwidth, trim={0 1cm 0 1.8cm}, clip]{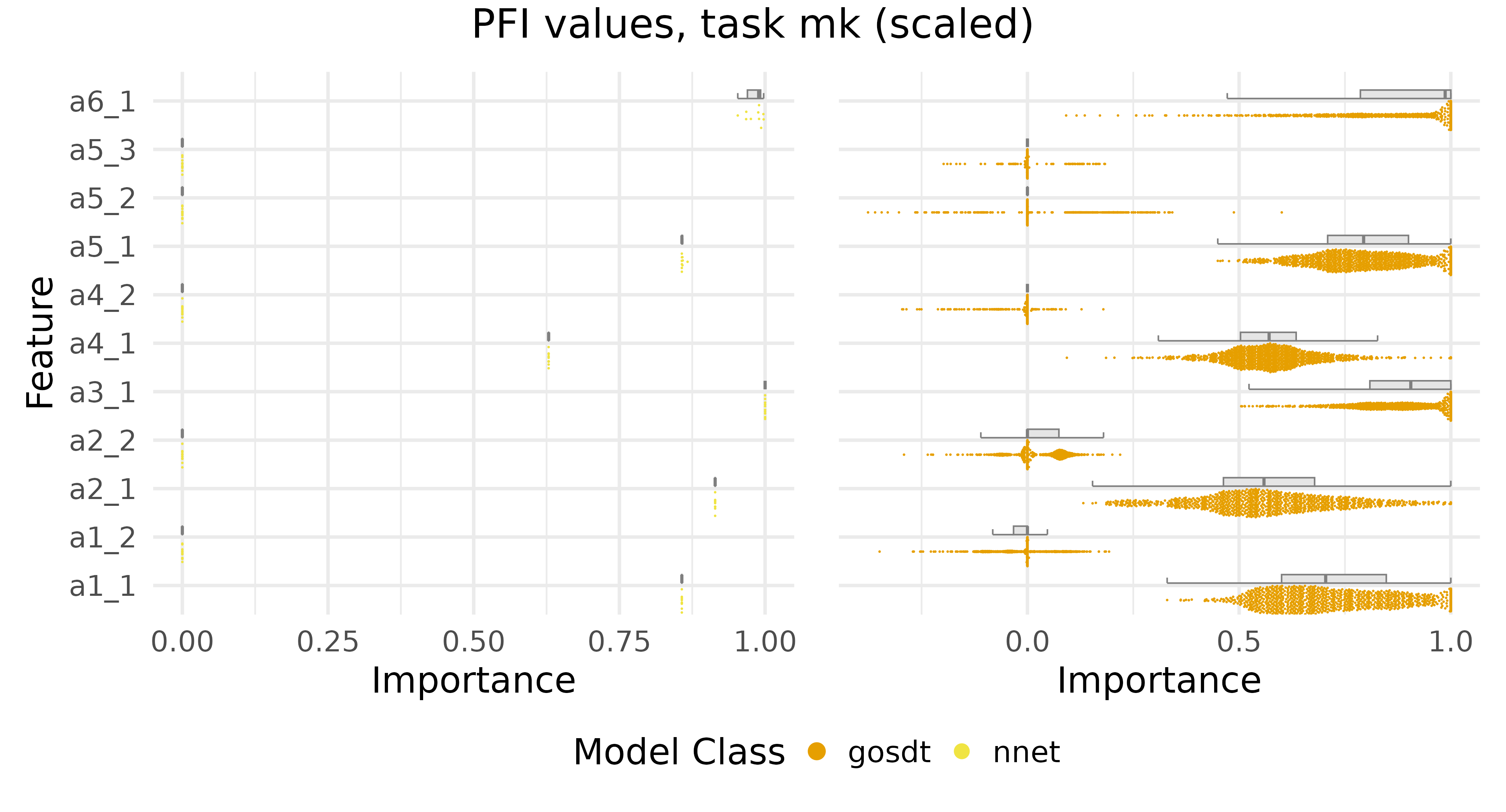}
        \caption{VICs}
    \label{fig_vics_all1a}
    \end{subfigure}
    
    \vspace{0.8em}
    
    \begin{subfigure}[b]{0.7\textwidth}
        \centering
        \includegraphics[width=\textwidth, trim={0 0 0 1.7cm}, clip]{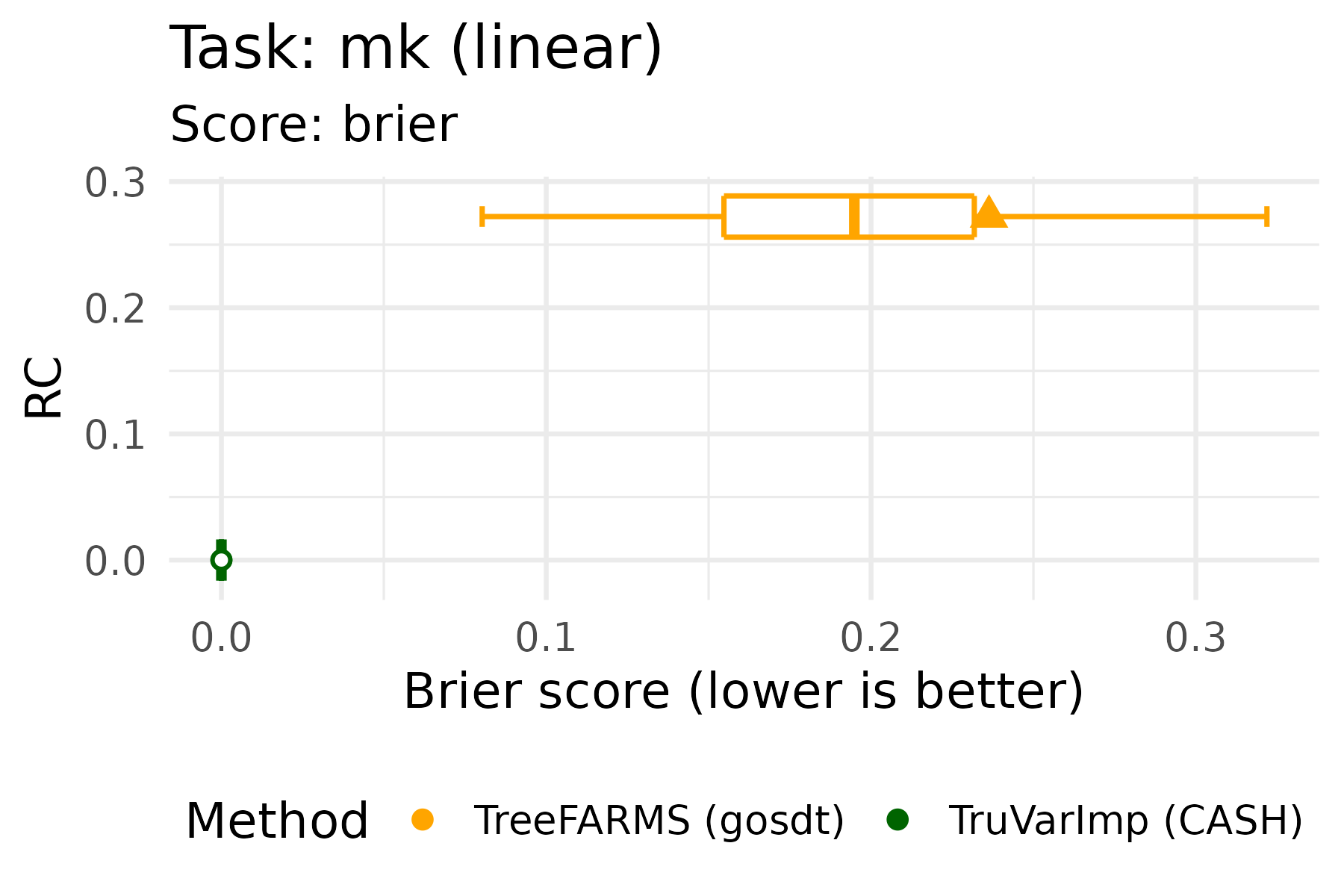}
        \caption{Rashomon capacity vs. predictive performance}
    \label{fig_vics_all1b}
    \end{subfigure}
    
    \caption{(a) VICs for \CashomonSet{} (left) and \TreeFARMS \RashomonSet{} (right) for task \texttt{MK}. For each feature, a point cloud colored by model class and a boxplot are displayed. If several models yield the same importance value, the points in the cloud scatter vertically. PFI values are scaled: the maximal PFI value for a model equals 1. \\
    (b) RC (y-axis) versus Brier score (x-axis); horizontal boxplots summarize test score distributions; triangles indicate the reference model.}
    \label{fig_vics_all1}
\end{figure}

\begin{figure}[htb]
    \centering
    
    \begin{subfigure}[b]{0.8\textwidth}
        \centering
        \includegraphics[width=\textwidth, trim={0 2.6cm 0 1.7cm}, clip]{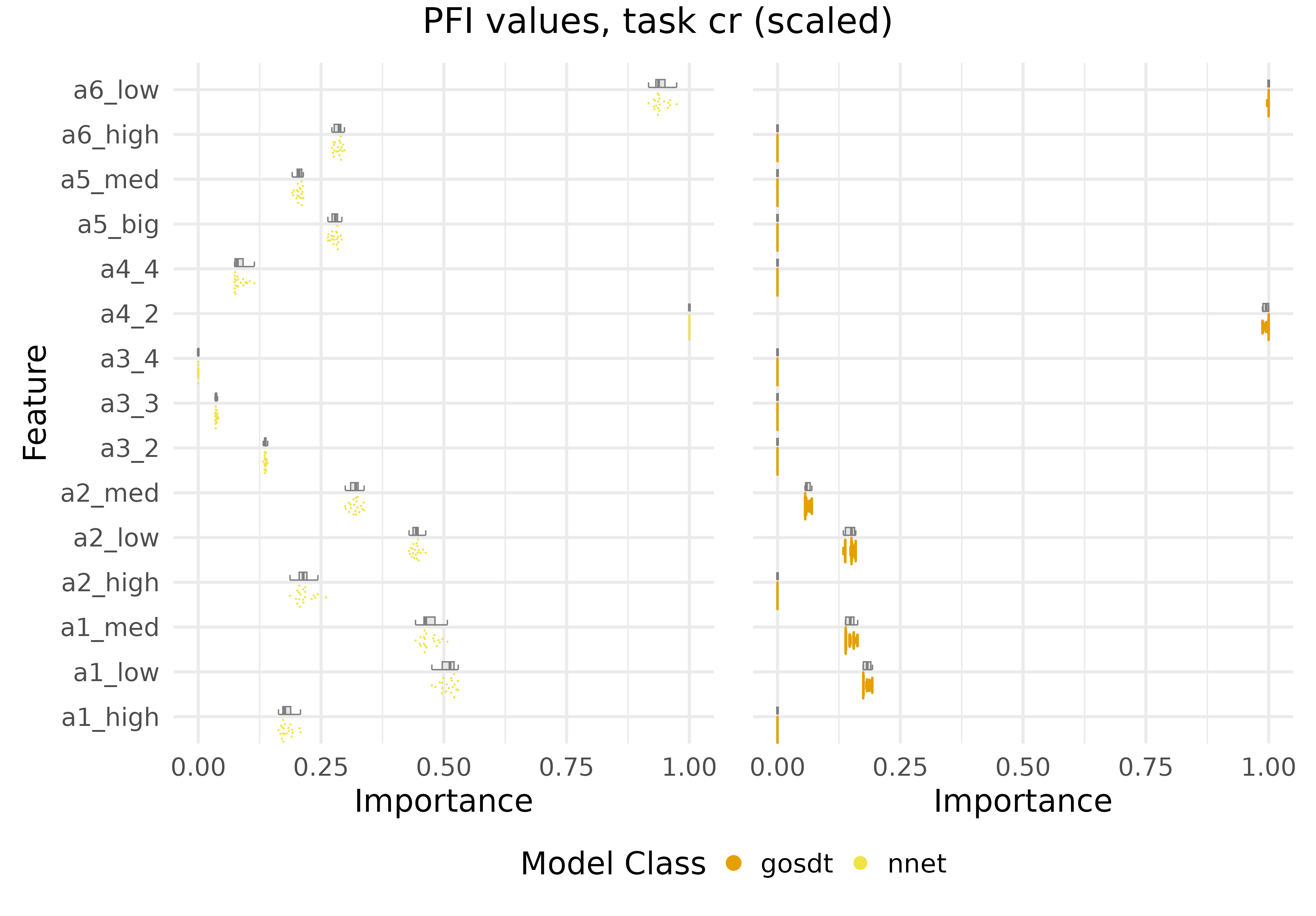}
        \caption{Task \texttt{CR}}
    \end{subfigure}
    
    \vspace{0.8em}
    
    \begin{subfigure}[b]{0.8\textwidth}
        \centering
        \includegraphics[width=\textwidth, trim={0 2.6cm 0 1.7cm}, clip]{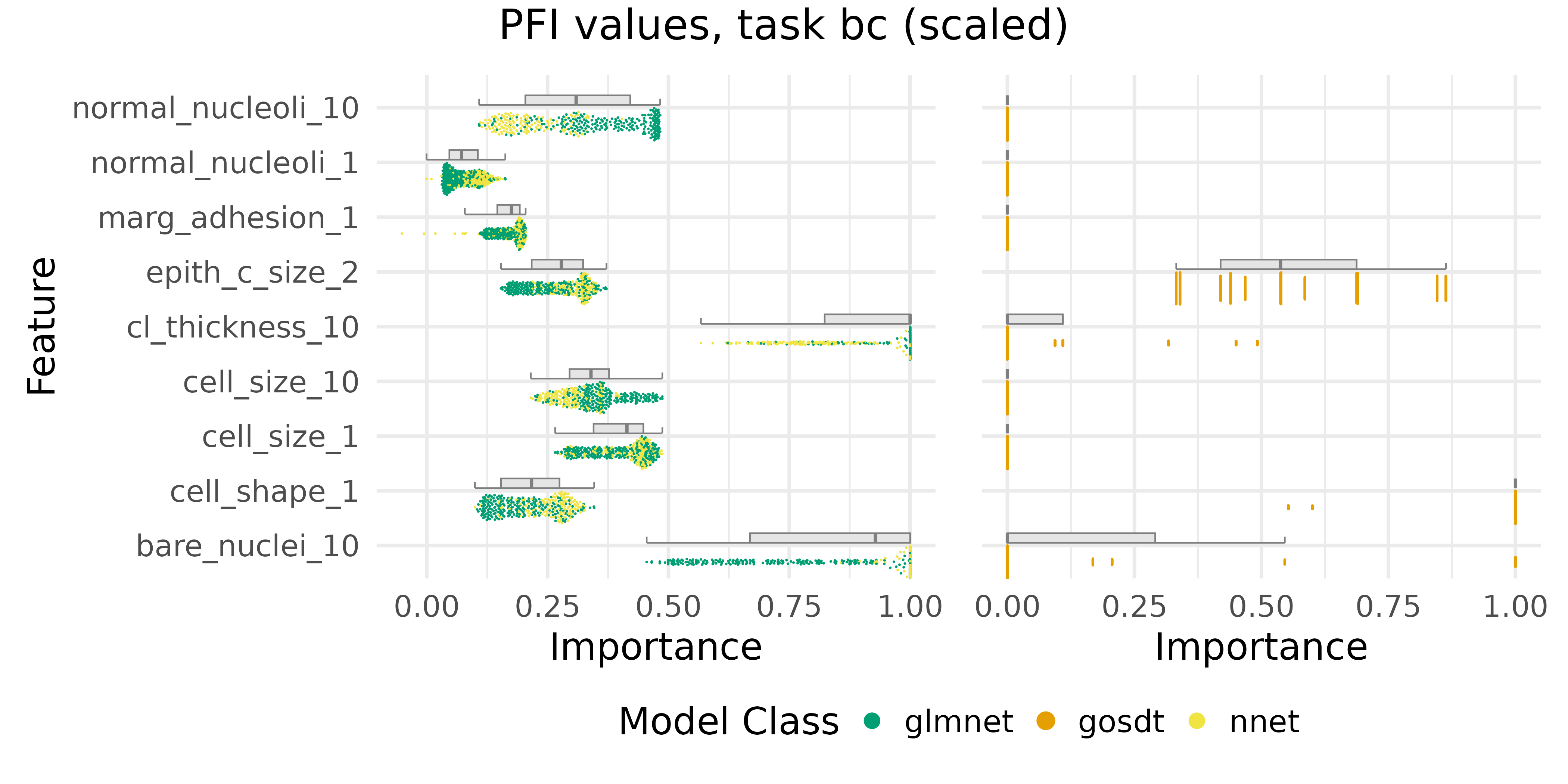}
        \caption{Task \texttt{BC}}
    \end{subfigure}
    
    \vspace{0.8em}
    
    \begin{subfigure}[b]{0.8\textwidth}
        \centering
        \includegraphics[width=\textwidth, trim={0 0 0 1.7cm}, clip]{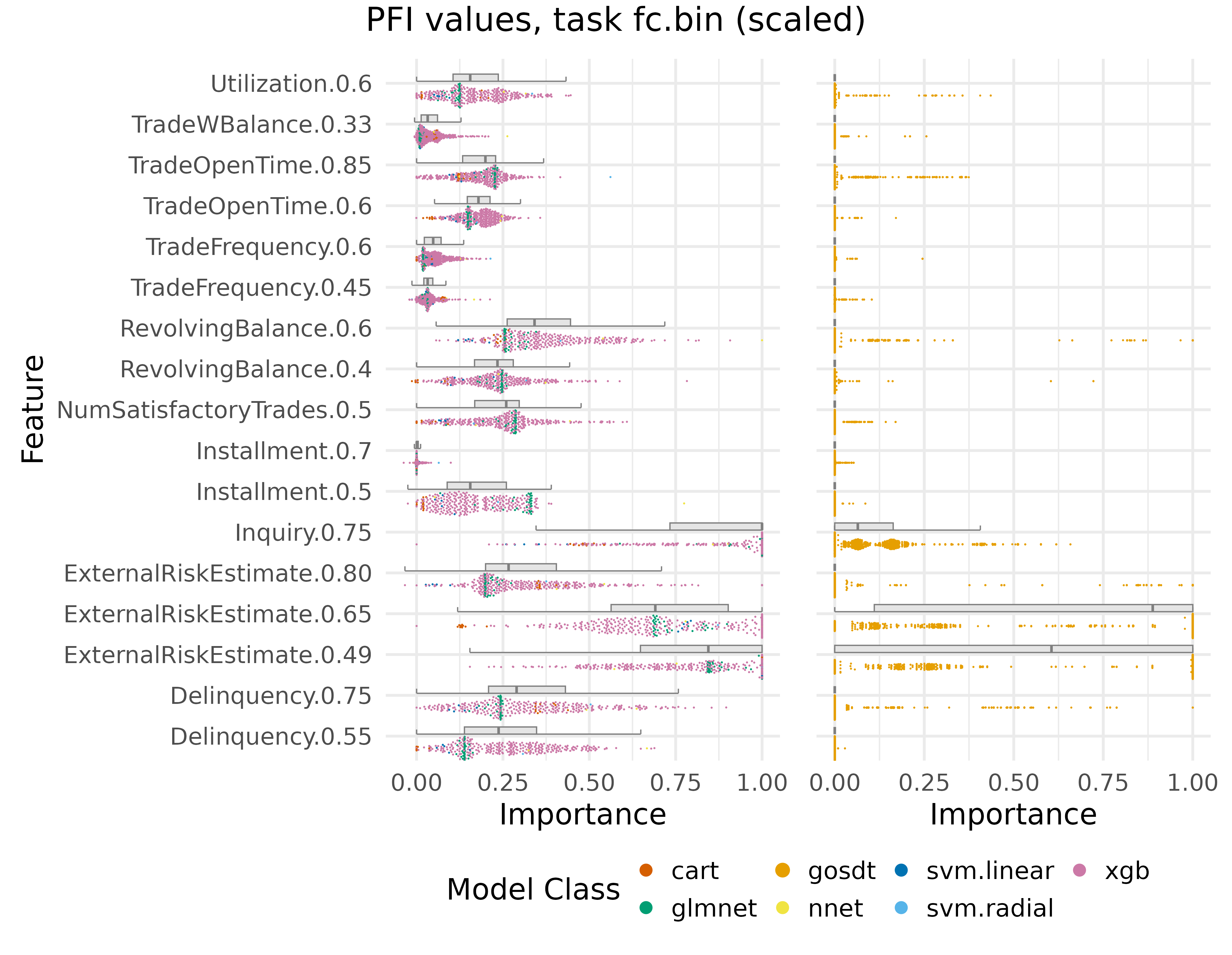}
        \caption{Task \texttt{FC}}
    \end{subfigure}   
    
    \caption{VICs for \CashomonSets{} (left) and \TreeFARMS \RashomonSets{} (right) for tasks \texttt{CR}, \texttt{BC}, and \texttt{FC}. For each feature, a point cloud colored by model class and a boxplot are displayed. If several models yield the same importance value, the points in the cloud scatter vertically. PFI values are scaled: the maximal PFI value for a model equals 1.}
    \label{fig_vics_all2}
\end{figure}

}

\clearpage

\end{bibunit}

\begin{thebibliography}{10}
\providecommand{\url}[1]{\texttt{#1}}
\providecommand{\urlprefix}{URL }
\providecommand{\doi}[1]{https://doi.org/#1}

\bibitem{binder-automl20a}
Binder, M., Pfisterer, F., Bischl, B.: Collecting empirical data about
  hyperparameters for data driven {AutoML}. In: {ICML} {W}orkshop on {A}uto{ML}
  (2020)

\bibitem{bischl-dmkd23a}
Bischl, B., Binder, M., Lang, M., Pielok, T., Richter, J., Coors, S., Thomas,
  J., Ullmann, T., Becker, M., Boulesteix, A., Deng, D., Lindauer, M.:
  Hyperparameter optimization: Foundations, algorithms, best practices, and
  open challenges. Wiley IRDMKD p. e1484 (2023)

\bibitem{bogunovic-nips16a}
Bogunovic, I., Scarlett, J., Krause, A., Cevher, V.: Truncated variance
  reduction: A unified approach to {Bayesian} optimization and level-set
  estimation. NeurIPS  \textbf{29} (2016)

\bibitem{breiman-mlj01a}
Breiman, L.: Random forests. MLJ  \textbf{45},  5--32 (2001)

\bibitem{breiman_statistical_2001}
Breiman, L.: Statistical modeling: The two cultures (with comments and a
  rejoinder by the author). Stat. Sci.  \textbf{16}(3) (2001)

\bibitem{bryan2005active}
Bryan, B., Nichol, R.C., Genovese, C.R., Schneider, J., Miller, C.J.,
  Wasserman, L.: Active learning for identifying function threshold boundaries.
  NeurIPS  \textbf{18} (2005)

\bibitem{cavus-isf26a}
Cavus, M., van Rijn, J.N., Biecek, P.: Quantifying model uncertainty with
  {AutoML} and {Rashomon} partial dependence profiles: Enabling trustworthy and
  human-centered {XAI}. Inf. Syst. Front.  (2026)

\bibitem{ciaperoni-acm24}
Ciaperoni, M., Xiao, H., Gionis, A.: Efficient exploration of the {Rashomon}
  set of rule-set models. KDD pp. 478--489 (2024)

\bibitem{dong-nmi20a}
Dong, J., Rudin, C.: Exploring the cloud of variable importance for the set of
  all good models. Nat. Mach. Intell.  \textbf{2}(12),  810--824 (2020)

\bibitem{fernandez-jmlr14a}
Fern{{\'a}}ndez-Delgado, M., Cernadas, E., Barro, S., Amorim, D.: Do we need
  hundreds of classifiers to solve real world classification problems? JMLR
  \textbf{15}(90),  3133--3181 (2014)

\bibitem{feurer-nips15a}
Feurer, M., Klein, A., Eggensperger, K., Springenberg, J., Blum, M., Hutter,
  F.: Efficient and robust automated machine learning. NeurIPS pp. 2962--2970
  (2015)

\bibitem{fisher-jmlr19a}
Fisher, A., Rudin, C., Dominici, F.: All models are wrong, but many are useful:
  Learning a variable's importance by studying an entire class of prediction
  models simultaneously. {JMLR}  \textbf{20}(177),  1--81 (2019)

\bibitem{garnett-book23a}
Garnett, R.: Bayesian optimization. Cambridge University Press (2023)

\bibitem{gotovos2013}
Gotovos, A., Casati, N., Hitz, G., Krause, A.: Active learning for level set
  estimation. IJCAI pp. 1344--1350 (2013)

\bibitem{hsu-nips2024}
Hsu, H., Brugere, I., Sharma, S., Lecue, F., Chen, R.: {RashomonGB}: Analyzing
  the {Rashomon} effect and mitigating predictive multiplicity in gradient
  boosting. NeurIPS  \textbf{37},  121265--121303 (2024)

\bibitem{hsu-neurips22a}
Hsu, H., Calmon, F.: {Rashomon} capacity: A metric for predictive multiplicity
  in classification. NeurIPS pp. 28988--29000 (2022)

\bibitem{hsu_arxiv24}
Hsu, H., Li, G., Hu, S., et~al.: Dropout-based {Rashomon} set exploration for
  efficient predictive multiplicity estimation. arXiv:2402.00728  (2024)

\bibitem{hutter-book19a}
Hutter, F., Kotthoff, L., Vanschoren, J.: Automated machine learning: Methods,
  systems, challenges. Springer (2019)

\bibitem{Kobylinska-ieee24a}
Kobyli\'{n}ska, K., Krzyzi\'{n}ski, M., Machowicz, R., Adamek, M., Biecek, P.:
  Exploration of the {Rashomon} set assists trustworthy explanations for
  medical data. IEEE J. Biomed. Health Inform.  \textbf{28}(11),  6454--6465
  (2024)

\bibitem{laberge-jmlr23a}
Laberge, G., Pequignot, Y., Mathieu, A., Khomh, F., Marchand, M.: Partial order
  in chaos: consensus on feature attributions in the {Rashomon} set. {JMLR}
  \textbf{24}(364),  1--50 (2023)

\bibitem{ledell-automl20a}
LeDell, E., Poirier, S.: {H2O} {AutoML}: Scalable automatic machine learning.
  In: {ICML} {W}orkshop on {A}uto{ML} (2020)

\bibitem{lin-icml20a}
Lin, J., Zhong, C., Hu, D., Rudin, C., Seltzer, M.: Generalized and scalable
  optimal sparse decision trees. ICML pp. 6150--6160 (2020)

\bibitem{mata_arXiv22}
Mata, K., Kanamori, K., Arimura, H.: Computing the collection of good models
  for rule lists. arXiv:2204.11285  (2022)

\bibitem{mcelfresh-neurips23a}
McElfresh, D., Khandagale, S., Valverde, J., {Prasad C}, V., Ramakrishnan, G.,
  Goldblum, M., White, C.: When do neural nets outperform boosted trees on
  tabular data? NeurIPS pp. 76336--76369 (2023)

\bibitem{mueller-ecml23a}
Müller, S., Toborek, V., Beckh, K., Jakobs, M., Bauckhage, C., Welke, P.: An
  empirical evaluation of the {Rashomon} effect in explainable machine
  learning. ECML PKDD  \textbf{14171},  462--478 (2023)

\bibitem{rasmussen-book06a}
Rasmussen, C., Williams, C.: Gaussian processes for machine learning. The MIT
  Press (2006)

\bibitem{rudin-icml24a}
Rudin, C., Zhong, C., Semenova, L., Seltzer, M., Parr, R., Liu, J., Katta, S.,
  Donnelly, J., Chen, H., Boner, Z.: Position: Amazing things come from having
  many good models. ICML pp. 42783--42795 (2024)

\bibitem{semenova-facct22a}
Semenova, L., Rudin, C., Parr, R.: On the existence of simpler machine learning
  models. ACM FAccT pp. 1827--1858 (2022)

\bibitem{shwartz-ziv-if22a}
Shwartz-Ziv, R., Armon, A.: Tabular data: Deep learning is not all you need.
  Information Fusion  \textbf{81},  84--90 (2022)

\bibitem{thornton-kdd13a}
Thornton, C., Hutter, F., Hoos, H., Leyton-Brown, K.: {A}uto-{WEKA}: combined
  selection and hyperparameter optimization of classification algorithms. KDD
  pp. 847--855 (2013)

\bibitem{xin-neurips22a}
Xin, R., Zhong, C., Chen, Z., Takagi, T., Seltzer, M., Rudin, C.: Exploring the
  whole {Rashomon} set of sparse decision trees. NeurIPS pp. 14071--14084
  (2022)

\bibitem{zhong-neurips23a}
Zhong, C., Chen, Z., Liu, J., Seltzer, M., Rudin, C.: Exploring and interacting
  with the set of good sparse generalized additive models. NeurIPS pp.
  56673--56699 (2023)

\end{thebibliography}

\begin{thebibliography}{10}
\providecommand{\url}[1]{\texttt{#1}}
\providecommand{\urlprefix}{URL }
\providecommand{\doi}[1]{https://doi.org/#1}

\bibitem{angwin2016machine}
Angwin, J., Larson, J., Mattu, S., Kirchner, L.: Machine bias --- there's
  software used across the country to predict future criminals. and it's biased
  against blacks. ProPublica, Online Edition (2016)

\bibitem{binder-automl20a}
Binder, M., Pfisterer, F., Bischl, B.: Collecting empirical data about
  hyperparameters for data driven {AutoML}. In: {ICML} {W}orkshop on {A}uto{ML}
  (2020)

\bibitem{mlr3pipelines}
Binder, M., Pfisterer, F., Lang, M., Schneider, L., Kotthoff, L., Bischl, B.:
  {mlr3pipelines} - flexible machine learning pipelines in {R}. {JMLR}
  \textbf{22}(184), ~1--7 (2021)

\bibitem{bischl-pattern25a}
Bischl, B., Casalicchio, G., Das, T., Feurer, M., Fischer, S., Gijsbers, P.,
  Mukherjee, S., Müller, A.C., Németh, L., Oala, L., Purucker, L., Ravi, S.,
  {van Rijn}, J.N., Singh, P., Vanschoren, J., {van der Velde}, J., Wever, M.:
  {OpenML}: Insights from 10 years and more than a thousand papers. Patterns
  \textbf{6}(7),  101317 (2025)

\bibitem{bogunovic-nips16a}
Bogunovic, I., Scarlett, J., Krause, A., Cevher, V.: Truncated variance
  reduction: A unified approach to {Bayesian} optimization and level-set
  estimation. NeurIPS  \textbf{29} (2016)

\bibitem{cardatasett}
Bohanec, M., Rajkovi\v{c}, V.: Knowledge acquisition and explanation for
  multi-attribute decision making. In: 8th Intl Workshop on Expert Systems and
  their Applications (1988)

\bibitem{breiman_statistical_2001}
Breiman, L.: Statistical modeling: The two cultures (with comments and a
  rejoinder by the author). Stat. Sci.  \textbf{16}(3) (2001)

\bibitem{brier1950verification}
Brier, G.W.: Verification of forecasts expressed in terms of probability. Mon.
  Weather Rev.  \textbf{78}(1), ~1--3 (1950)

\bibitem{chen-kdd16a}
Chen, T., Guestrin, C.: {XGBoost}: {A} scalable tree boosting system. In: Proc.
  of {KDD}'16. pp. 785--794 (2016)

\bibitem{Ewald-xAI24a}
Ewald, F., Bothmann, L., Wright, M., Bischl, B., Casalicchio, G., K{\"o}nig,
  G.: A guide to feature importance methods for scientific inference. In: xAI.
  pp. 440--464. Springer Nature Switzerland, Cham (2024)

\bibitem{fanaee-pai14}
Fanaee-T, H., Gama, J.: Event labeling combining ensemble detectors and
  background knowledge. Prog. Artif. Intell.  \textbf{2},  113--127 (2014)

\bibitem{fisher-jmlr19a}
Fisher, A., Rudin, C., Dominici, F.: All models are wrong, but many are useful:
  Learning a variable's importance by studying an entire class of prediction
  models simultaneously. {JMLR}  \textbf{20}(177),  1--81 (2019)

\bibitem{Fu-cvxr}
Fu, A., Narasimhan, B., Boyd, S.: {CVXR}: An {R} package for disciplined convex
  optimization. J. Stat. Softw.  \textbf{94}(14),  1--34 (2020)

\bibitem{gotovos2013}
Gotovos, A., Casati, N., Hitz, G., Krause, A.: Active learning for level set
  estimation. IJCAI pp. 1344--1350 (2013)

\bibitem{data-gc}
Hofmann, H.: Statlog (german credit data) (1994)

\bibitem{hsu-neurips22a}
Hsu, H., Calmon, F.: {Rashomon} capacity: A metric for predictive multiplicity
  in classification. NeurIPS pp. 28988--29000 (2022)

\bibitem{krause2014submodular}
Krause, A., Golovin, D.: Submodular function maximization. Tractability
  \textbf{3}(71-104), ~3 (2014)

\bibitem{krause2005note}
Krause, A., Guestrin, C.: A note on the budgeted maximization of submodular
  functions. Citeseer (2005)

\bibitem{lin-icml20a}
Lin, J., Zhong, C., Hu, D., Rudin, C., Seltzer, M.: Generalized and scalable
  optimal sparse decision trees. ICML pp. 6150--6160 (2020)

\bibitem{Meyer1999}
Meyer, D., Dimitriadou, E., Hornik, K., Weingessel, A., Leisch, F.: {e1071}:
  Misc functions of the department of statistics, probability theory group
  (formerly: {E1071}), tu wien (1999)

\bibitem{nielsen2009}
Nielsen, F., Nock, R.: Sided and symmetrized {Bregman} centroids. IEEE Trans.
  Inf. Theory.  \textbf{55}(6),  2882--2904 (2009)

\bibitem{rasmussen-book06a}
Rasmussen, C., Williams, C.: Gaussian processes for machine learning. The MIT
  Press (2006)

\bibitem{Ripley2009}
Ripley, B.: {nnet}: Feed-forward neural networks and multinomial log-linear
  models (2009)

\bibitem{srinivas2012information}
Srinivas, N., Krause, A., Kakade, S.M., Seeger, M.W.: Information-theoretic
  regret bounds for {Gaussian} process optimization in the bandit setting. IEEE
  Trans. Inf. Theory.  \textbf{58}(5),  3250--3265 (2012)

\bibitem{Therneau1999}
Therneau, T., Atkinson, B.: {rpart}: Recursive partitioning and regression
  trees (1999)

\bibitem{monk2dataset}
Thrun, S., Bala, J., Bloedorn, E., Bratko, I., Cestnik, B., Cheng, J., Jong,
  K.D., Dzeroski, S., Fahlman, S., Fisher, D., Hamann, R., Kaufman, K., Keller,
  S., Kononenko, I., Kreuziger, J., Michalski, R., Mitchell, T., Pachowicz, P.,
  Reich, Y., Vafaie, H., de~Welde, W.V., Wenzel, W., Wnek, J., , Zhang, J.: The
  {MONK's} problems - a performance comparison of different learning
  algorithms. Tech. Rep. CS-CMU-91-197, Carnegie Mellon University (1991)

\bibitem{breastcancerdataset}
Wolberg, W., Mangasarian, O.: Multisurface method of pattern separation for
  medical diagnosis applied to breast cytology. Proc Natl Acad Sci U S A
  \textbf{87}(23),  9193--91999 (1990)

\bibitem{xin-neurips22a}
Xin, R., Zhong, C., Chen, Z., Takagi, T., Seltzer, M., Rudin, C.: Exploring the
  whole {Rashomon} set of sparse decision trees. NeurIPS pp. 14071--14084
  (2022)

\bibitem{zou-jrsssb05a}
Zou, H., Hastie, T.: Regularization and variable selection via the elastic net.
  J. R. Stat. Soc. Ser. B Stat. Methodol.  \textbf{67}(2),  301--320 (2005)

\end{thebibliography}
\end{document}